\def\eqref#1{equation~\ref{#1}}
\def\1{\bm{1}}
\DeclareMathAlphabet{\mathsfit}{\encodingdefault}{\sfdefault}{m}{sl}
\SetMathAlphabet{\mathsfit}{bold}{\encodingdefault}{\sfdefault}{bx}{n}
\newcommand{\norm}[1]{\left\lVert#1\right\rVert}
\newtheorem{theorem}{Theorem}
\newtheorem{lemma}[theorem]{Lemma}
\newtheorem{assumption}[theorem]{Assumption}
\newcommand{\BibTeX}{B\kern-.05em{\sc i\kern-.025em b}\kern-.08em\TeX}
\begin{document}


\begin{frontmatter}


\paperid{1624} 


\title{Adjacent Leader Decentralized \\
Stochastic Gradient Descent}


\author[A]{\fnms{Haoze}~\snm{He}\footnote{Equal contribution.}}
\author[B]{\fnms{Jing}~\snm{Wang}\footnote{Equal contribution.}}
\author[C]{\fnms{Anna}~\snm{Choromanska}} 

\address[A,B,C]{New York University, Tandon School of Engineering}


\begin{abstract}
This work focuses on the decentralized deep learning optimization framework. We propose Adjacent Leader Decentralized Gradient Descent (AL-DSGD), for improving final model performance, accelerating convergence, and reducing the communication overhead of decentralized deep learning optimizers. AL-DSGD relies on two main ideas. Firstly, to increase the influence of
the strongest learners on the learning system it assigns weights to different neighbor workers according to both their performance and the degree when averaging among them, and it applies a corrective force on the workers dictated by both the currently best-performing neighbor and the neighbor with the maximal degree. 
Secondly, to alleviate the problem of the deterioration of the convergence speed and performance of the nodes with lower degrees, AL-DSGD relies on dynamic communication graphs, which effectively allows the workers to communicate with more nodes while keeping the degrees of the nodes low. 
Experiments demonstrate that AL-DSGD accelerates the convergence of the decentralized state-of-the-art techniques
and improves their test performance especially in the communication constrained environments. We also theoretically prove the convergence of the proposed scheme.
Finally, we release to the community a highly general and concise PyTorch-based library for distributed training of deep learning models that supports easy implementation of any distributed deep learning approach ((a)synchronous, (de)centralized). 
\end{abstract}

\end{frontmatter}


\section{Introduction}

Stochastic gradient descent (SGD) is the skeleton of most state-of-the-art (SOTA) machine learning algorithms. The stability and convergence rate of classical SGD, which runs serially at a single node, has been well studied~\citep{dekel2012optimal, ghadimi2013stochastic}. However, recently, the dramatical increase in the size of deep learning models~\citep{devlin2018bert, brown2020language, radford2021learning, yuan2021florence}, the amount of computations and the size of the data sets~\citep{deng2009imagenet, krizhevsky2009learning} make it challenging to train the model on a single machine. To efficiently process these quantities of data with deep learning models, distributed SGD, which parallelizes training across multiple workers, has been successfully employed. However, centralized distributed SGD with a parameter server~\citep{dean2013tail, li2014communication, cui2014exploiting, dutta2016short, dean2012large} suffers from the communication bottleneck problem when the framework either has a large number of workers or low network bandwidth~\citep{liu2020decentralized, lian2018asynchronous, lian2017can,wang2021cooperative}. 

To overcome the communication bottleneck, decentralized frameworks have been proposed. Moving from centralized to decentralized learning system is a highly non-trivial task. Many popular distributed deep learning schemes, including EASGD~\citep{zhang2015deep}, LSGD~\citep{teng2019leader}, as well as PyTorch embedded scheme called DataParallel~\citep{li2020pytorch} are only compatible with the centralized framework. In a decentralized setting, there exists no notion of the central parameter server and the communication between workers is done over the communication network with a certain topology. The decentralized approach is frequently employed and beneficial for training in various settings, including sensor networks, multi-agent systems, and federated learning on edge devices. Most of the decentralized SGD algorithms communicate over the network with pre-defined topology and utilize parameter averaging instead of gradient updates during the communication step. This is the case for the SOTA approach called decentralized parallel SGD (D-PSGD) \cite{lian2017can}, that allows each worker to send a copy of its model to its adjacent workers at every iteration, and its variants~\cite{lian2017can, lian2018asynchronous, wang2022matcha, koloskova2020unified}. These methods satisfy convergence guarantees in terms of iterations or communication rounds~\citep{duchi2011dual, jakovetic2018convergence, nedic2009distributed,scaman2018optimal,towfic2016excess,yuan2016convergence, zeng2018nonconvex, lian2017can,lian2018asynchronous}. Since each worker only needs to compute an average with its neighbors, they reduce communication complexity compared to centralized methods~\citep{lian2018asynchronous, blot2016gossip, jin2016scale, lian2017can, wang2022matcha}. (Neighbor workers refer to a group of workers within the distributed deep learning framework that are connected to a given worker.) Computing simple average of model parameters during the communication step effectively leads to treating all the workers over which the average is computed equally, regardless of their learning capabilities. \textbf{\textit{What is new in this paper?}} In this paper, we propose AL-DSGD, a decentralized distributed SGD algorithm with a novel \textit{averaging strategy} that assigns specific weights to different neighbor learners based on both their performance and degree, and applies a \textit{corrective force} dictated by both the currently best-performing and the highest degree neighbor when training to accelerate the convergence and improve the generalization performance.

Furthermore, the convergence rate of decentralized SGD is influenced by the network topology. A dense topology demands more communication time~\cite{wang2022matcha}, despite converging fast iteration-wise~\citep{wang2022matcha,koloskova2020unified}, while a sparse topology or one with imbalanced degrees across learners (i.e., nodes have degrees that vary a lot; we will refer to this topology as imbalanced topology) results in a slower convergence in terms of iterations but incurs less communication delays~\citep{wang2019matcha}. Previously proposed solutions addressing the problem of accelerating convergence without sacrificing the performance of the system include bias-correction techniques~\citep{yuan2020influence, huang2022improving, yuan2021removing, tang2018d, he2022rcd, yuan2021removing, alghunaim2022unified}, periodic global averaging or multiple partial averaging methods for reducing frequent communication~\citep{chen2021accelerating, benjamini2014mixing, wang2019slowmo, berahas2018balancing, kong2021consensus}, methods that design new topologies~\citep{nedic2018network, chow2016expander, koloskova2020unified}, or techniques utilizing the idea of communicating more frequently over connectivity-critical links and at the same time using other links less frequently as is done in the SOTA algorithm called MATCHA~\citep{wang2022matcha,wang2021cooperative}. However, these proposed solutions rely on simple model averaging or gradient summation during the communication step. Moreover, many of these solutions use an optimizer that is dedicated to single GPU training and naively adapt it to distributed optimization. It still remains a challenge to design a strategy for handling the communication topology that at the same time allows for fast convergence in terms of iterations and carries low communication burden resulting in time-wise inexpensive iterations. \textbf{\textit{What else is new in this paper?}} Our proposed AL-DSGD algorithm addresses this challenge by employing \textit{dynamic communication graphs}. The dynamic graphs are a composite structure formed from a series of communication graphs. The communication graph and weight matrices follow established practices\citep{wang2019matcha, lian2017can}. Each matrix W(k) is symmetric and doubly stochastic, ensuring nodes converge to the same stationary point\citep{wang2019matcha}. Our method switches between different communication graphs during training thus allowing workers to communicate with more neighbors than when keeping the topology fixed without increasing the total degree of the graph\footnote{Total degree is the total number of links in the communication network topology.}. That equips our method with robustness to the problems faced by imbalanced and sparse topologies, and allows to achieve fast convergence and improved generalization performance in communication-constrained environments.

\textbf{\textit{What else is our contribution in this paper?}} The empirical analysis demonstrating that AL-DSGD converges faster, generalizes better, and is more stable in the communication-constrained environments compared to the SOTA approaches, and the theoretical analysis showing that the AL-DSGD algorithm that relies on dynamic communication graphs indeed achieves a sublinear convergence rate is all new. The proposed AL-DSGD is a meta-scheme algorithm that can be applied to most decentralized SGD algorithms. Given any decentralized algorithm as a baseline, AL-DSGD can accelerate its convergence while also improve robustness to imbalanced and sparse topologies. Finally, we release a general and concise PyTorch-based library for distributed training of deep learning models that supports easy implementation of any distributed deep learning approach ((a)synchronous, (de)centralized). The library is attached to the Supplementary materials and will be released publicly. 

The paper is organized as follows: Section~\ref{sec:PM} contains the preliminaries, Section~\ref{sec:Motivations} motivates the proposed AL-DSGD algorithm, Section~\ref{sec:Method} presents the algorithm, Section~\ref{sec:Theory} captures the theoretical analysis, Section~\ref{sec:Experiments} reports the experimental results, and finally Section~\ref{sec:Conclusions} concludes the paper. Supplementary materials contain proofs, additional derivations, in-depth description of the experiments, as well as additional empirical results.


\section{Preliminaries}
\label{sec:PM}

\subsection{Distributed Optimization Framework}
Distributed machine learning models are trained on data center clusters, where workers can communicate with each other, subject to communication time and delays. Consider a distributed SGD system with $m$ worker nodes. The model parameters are denoted by $x$ where $x \in R^d$. Each worker node $i$ sees data from its own local data distribution $D_i$. The purpose of distributed SGD is to train a model by minimizing the objective function $F(x)$ using $m$ workers. The problem can be defined as follows: 
\begin{align}
\min_{x\in R^d} F(x) &= \min_{x\in R^d} \frac{1}{m} \sum_{i = 1}^m F_i(x)
\notag\\
&=\min_{x\in R^d} \frac{1}{m} \sum_{i = 1}^m E_{s\sim D_i}[l(x;s)],
\end{align}

where $l(x)$ is the loss function and $F_i(x)=E_{s\sim D_i}[l(x;s)]$ is the local objective function optimized by the $i$-th worker.
\subsection{Decentralized SGD (D-PSGD)}

Decentralized distributed SGD can overcome the communication bottleneck problem of centralized SGD~\cite{lian2017can,scaman2018optimal,wang2022matcha, he2022accelerating}. In D-PSGD (also referred to as consensus-based distributed SGD), workers perform one local update and average their models only with neighboring workers. The update rule is given as:
\begin{equation}
x_{k+1,i} = \sum_{j=1}^m W_{ij}\left[x_{k,j} - \eta g_j(x_{k,j};\xi_{k,j})\right],
\end{equation}

where $x_{k,j}$ denotes the model parameters of worker $j$ at iteration $k$, $\xi_{k,j}$ denotes a batch sampled from a local data distribution of worker $j$ at iteration $k$, $W \in \mathbb{R}^{m\times m}$ and $W_{ij}$ is the $(i,j)$-th element of the mixing matrix $W$ which presents the adjacency of node $i$ and $j$. $W_{ij}$ is non-zero if and only if node $i$ and node $j$ are connected. Consequently, sparse topologies would correspond to sparse mixing matrix $W$ and dense topologies correspond to matrix $W$ with most of the entries being away from $0$. 

\section{Motivations}
\label{sec:Motivations}


\begin{figure*}[t!]
    \centering
    \includegraphics[width=130mm]{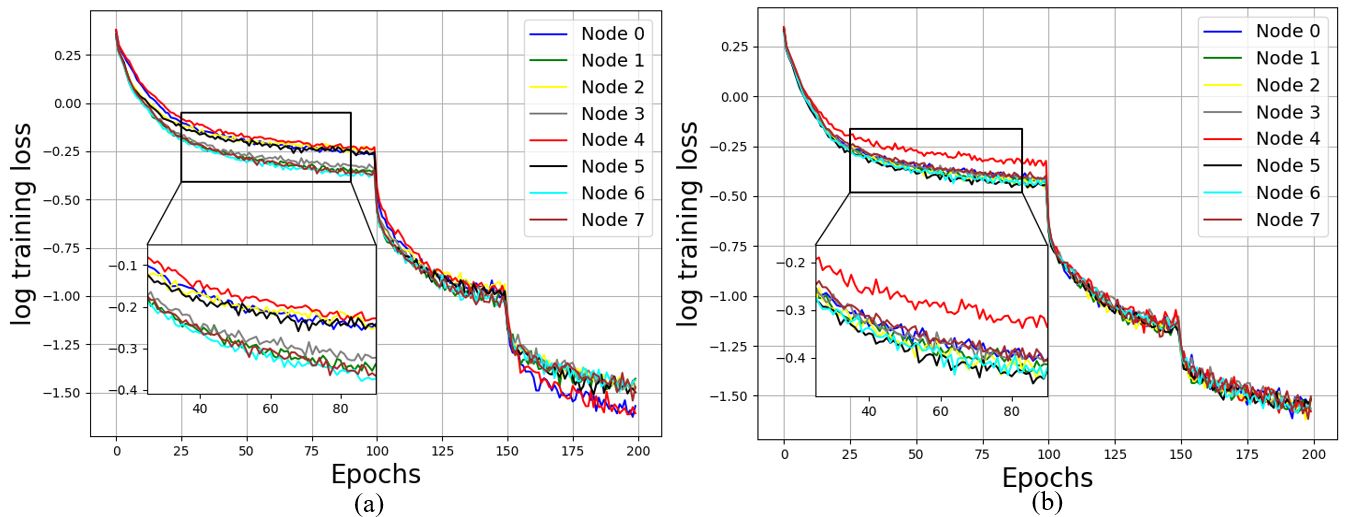}
    \centering
    \caption{Workers with lower degree have worse performance. (a) is the performance of D-PSGD and (b) is the performance of MATCHA. Results were obtained on CIFAR-10 data set using ResNet-50.}
    \label{Fig: 2}
\end{figure*}


In this section we motivate our AL-DSGD algorithm. We start from discussing two SOTA decentralized SGD approaches, D-PSGD and MATCHA, and next show that both D-PSGD and MATCHA suffer from, what we call, the lower degree-worse performance phenomenon. 
\begin{figure}[H]
    \centering
    \includegraphics[width=40mm]{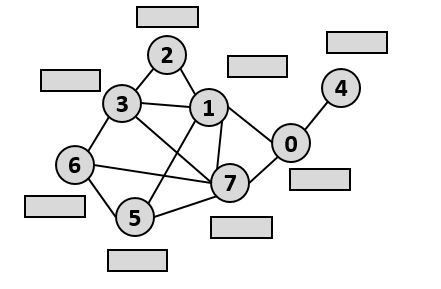}
    \vspace{-0.20in}
    \caption{Illustration of decentralized SGD algorithm.}
    \label{Fig: 1}
\end{figure}

\subsection{D-PSGD and MATCHA: Overview}
An overview of the base communication topology for existing decentralized SGD methods can be found in Figure \ref{Fig: 1}. Here, we first compare the construction of the sequence of weight matrices (or in other words mixing matrices) $\{W^{(k)}\}$ ($k$ is the iteration counter) for D-PSGD and MATCHA. In the case of the D-PSGD algorithm, the weight matrices remain the same (denoted as $W$) for the whole training process and thus the communication topology remains fixed the whole time.
The communication between workers during training occurs over a computational graph, where the workers are represented as nodes and the edges denote the connections between them. The connectivity pattern is captured in $W$. Similarly to D-PSGD, MATCHA starts with a pre-defined communication network topology. In contrast to D-PSGD, MATCHA allows the system designer to set a flexible communication budget $c_b$, which represents the average frequency of communication over the links in the network. The sequence of the weight matrices $\{W^{(k)}\}$ is constructed dynamically and depends on the choice of the budget $c_b$.
When $c_b$ = 1, MATCHA is equivalent to vanilla D-PSGD algorithm. When $c_b < 1$, MATCHA carefully reduces the frequency of communication over each link (how quickly this is done depends upon the importance of the link in maintaining the overall connectivity of the graph). In addition, MATCHA assigns probabilities to connections between workers, thus they may become active in some iterations. 
\vspace{-0.03in}
\subsection{Lower Degree - Worse Performance Phenomenon}
\vspace{-0.03in}

\begin{table}[htb]
\centering
\resizebox{\columnwidth}{!}{%
\begin{tabular}{lllllllll}
\hline
\multicolumn{1}{c}{\textbf{}}       &\multicolumn{3}{c}{\textbf{D-PSGD}}                                                                                            &\multicolumn{3}{c}{\textbf{MATCHA}}      \\ \hline
\multicolumn{1}{c}{\textbf{Node}}   &\multicolumn{1}{c}{\textbf{Loss$^1$}} & \multicolumn{1}{c}{\textbf{Loss$^2$}}   &\multicolumn{1}{c}{\textbf{TEST ACC}}  &\multicolumn{1}{c}{\textbf{Loss$^1$}} &\multicolumn{1}{c}{\textbf{Loss$^2$}}   &\multicolumn{1}{c}{\textbf{TEST ACC}}\\ \hline
\multicolumn{1}{c}{0}               &\multicolumn{1}{c}{0.54} & \multicolumn{1}{c}{\textbf{0.027}$\downarrow$}   &\multicolumn{1}{c}{\textbf{87.95}$\downarrow$} &\multicolumn{1}{c}{0.39} & \multicolumn{1}{c}{0.030}   &\multicolumn{1}{c}{93.78}\\ 
\multicolumn{1}{c}{1}               &\multicolumn{1}{c}{0.45} & \multicolumn{1}{c}{0.037}   &\multicolumn{1}{c}{92.11} &\multicolumn{1}{c}{0.36} & \multicolumn{1}{c}{0.030}   &\multicolumn{1}{c}{93.68}\\ 
\multicolumn{1}{c}{2}               &\multicolumn{1}{c}{0.58} & \multicolumn{1}{c}{0.035}   &\multicolumn{1}{c}{92.21} &\multicolumn{1}{c}{0.36} & \multicolumn{1}{c}{0.029}   &\multicolumn{1}{c}{93.76}\\ 
\multicolumn{1}{c}{3}               &\multicolumn{1}{c}{0.45} & \multicolumn{1}{c}{0.034}   &\multicolumn{1}{c}{92.36} &\multicolumn{1}{c}{0.38} & \multicolumn{1}{c}{0.027}   &\multicolumn{1}{c}{93.91}\\ 
\multicolumn{1}{c}{4}               &\multicolumn{1}{c}{\textbf{0.59}} & \multicolumn{1}{c}{\textbf{0.025}$\downarrow$}   &\multicolumn{1}{c}{\textbf{87.86}$\downarrow$} &\multicolumn{1}{c}{\textbf{0.47}} & \multicolumn{1}{c}{0.026}   &\multicolumn{1}{c}{93.68}\\ 
\multicolumn{1}{c}{5}               &\multicolumn{1}{c}{0.55} & \multicolumn{1}{c}{0.032}   &\multicolumn{1}{c}{92.25} &\multicolumn{1}{c}{0.36} & \multicolumn{1}{c}{0.031}   &\multicolumn{1}{c}{93.81}\\ 
\multicolumn{1}{c}{6}               &\multicolumn{1}{c}{0.42} & \multicolumn{1}{c}{0.037}   &\multicolumn{1}{c}{92.38} &\multicolumn{1}{c}{0.37} & \multicolumn{1}{c}{0.027}   &\multicolumn{1}{c}{93.72}                 \\ 
\multicolumn{1}{c}{7}               &\multicolumn{1}{c}{0.44} & \multicolumn{1}{c}{0.034}   &\multicolumn{1}{c}{92.32} &\multicolumn{1}{c}{0.38} & \multicolumn{1}{c}{0.031}   &\multicolumn{1}{c}{93.82}                 \\ \hline
\end{tabular}    
}
\vspace{-0.05in}
\caption{\label{tab:widgets} Performance of D-PSGD and MATCHA on CIFAR-10. Loss$^1$ is the training loss computed  at the $100^{\text{th}}$ epoch on a local data set seen by the node. Loss$^2$ is the training loss computed at the $200^{\text{th}}$ epoch on a local data set seen by the node.}
\vspace{-0.15in}
\label{Tab:1}
\end{table}

Previous literature~\citep{wang2019matcha} explored how topology affects performance in distributed decentralized SGD optimizers. Denser networks lead to quicker error convergence in terms of iterations but introduce longer communication delays. In addition we discuss the lower degree-worse performance phenomenon in this section. In particular we show that the worker with lower degree will converge slower than the other nodes, and achieve worse final model at the end of training. We use Figure~\ref{Fig: 2} and Table~\ref{Tab:1} to illustrate this phenomenon (refer to Section~\ref{sec:Experiments} regarding experimental details). Figure~\ref{Fig: 2} shows that the node denoted as Node 4 achieves the slowest convergence before the first drop of the learning rate from among all of the workers. 

As seen in Figure~\ref{Fig: 1}, Node 4 possesses the lowest degree, with only Node 0 connected to it. Due to the weight matrix $W$ design, weaker node performance adversely affects neighbors. Notably, in D-PSGD, Node 0 and Node 4 display reduced local training loss, yet their test accuracy lags behind other nodes. This is because the training loss is calculated using a local subset seen by the worker, and a lack of communication results in the over-fitting of the local model. This experiment shows that lower degree leads to slower convergence and worse final model (this phenomenon is observed in every single experiment with different random seeds).
In both D-PSGD and MATCHA, the output model is the average of models from all of the workers. Workers with lower degrees will naturally deteriorate the performance of the entire system. Therefore, a natural question arises: how to improve the performance of workers with low degrees and accelerate the overall convergence of the entire system without increasing the density of the network topology?

\section{Proposed Method}
\label{sec:Method}
\begin{algorithm}[t!]
    \begin{algorithmic}[1]
    \caption{Proposed AL-DSGD algorithm}
    \label{Alg: 2}
    \STATE \textbf{Initialization:}initialize local models $\{x_0^i\}_{i=1}^m$ with the \textbf{different} initialization, learning rate $\gamma$, weight matrices sequence $\{W^{(k)}\}$, and the total number of iterations $K$. Initialize communication graphs set $\{G_{(i)}\}_{i=1}^n$, each communication graph $\{G_{(i)}\}$has its own weight matrices sequence $\{W^{(k)}\}$. Pulling coefficients $\lambda_N$ and $\lambda_{\tau}$. Model weights coefficients $w_N$ and $w_\tau$. Split the original dataset into subsets.\;
    \WHILE{k=0, 1, 2, ... K-1 $\leq$ K}
            \STATE Compute  the local stochastic gradient $\nabla F_i(x_{k, i}; \xi_{k, i})$ on all workers\;
            \STATE For each worker, fetch neighboring models and determine the adjacent best worker $x_{k,i}^N$ and the adjacent maximum degree worker: $x_{k,i}^\tau$.
            \STATE Update the local model with corrective force: 
            \vspace{-0.1in}
            \begin{align*}
                x_{k+\frac{1}{2}, i} &= x_{k, i} - \gamma \nabla F_i(x_{k, i}; \xi_{k, i}) - \gamma \lambda_N (x_{k, i} - x_{k,i}^N) \\
                &- \gamma \lambda_{\tau} (x_{k, i} - x_{k,i}^\tau)
            \end{align*}\;
            \vspace{-0.25in}
            \STATE Average the model with neighbors, give additional weights to worker $x_{k,i}^N$ and $x_{k,i}^\tau$: 
            \vspace{-0.2in}
            \begin{align*}
                x_{k+1, i} &= (1 - w_N - w_{\tau}) \cdot \left(\right. \sum_{j=1,j\ne i}^m W_{ij}^{(k)}x_{k,j}\\
                \vspace{-0.5in}
                &+ W_{ii}^{(k)} x_{k+\frac{1}{2},i}\left.\right)+ w_N \cdot x_{k,i}^N + w_{\tau} \cdot x_{k,i}^\tau
            \end{align*}\;
            \vspace{-0.2in}
            \STATE Switch to new communication graph.\; 
    \ENDWHILE
    \STATE \textbf{Output:}the average of all workers $\frac{1}{m} \sum_{i=1}^m x_{k,i}$\;
    \vspace{-0.03in}
    \end{algorithmic}
\end{algorithm}

\begin{figure*}[t!]
    \centering
    \includegraphics[width=160mm]{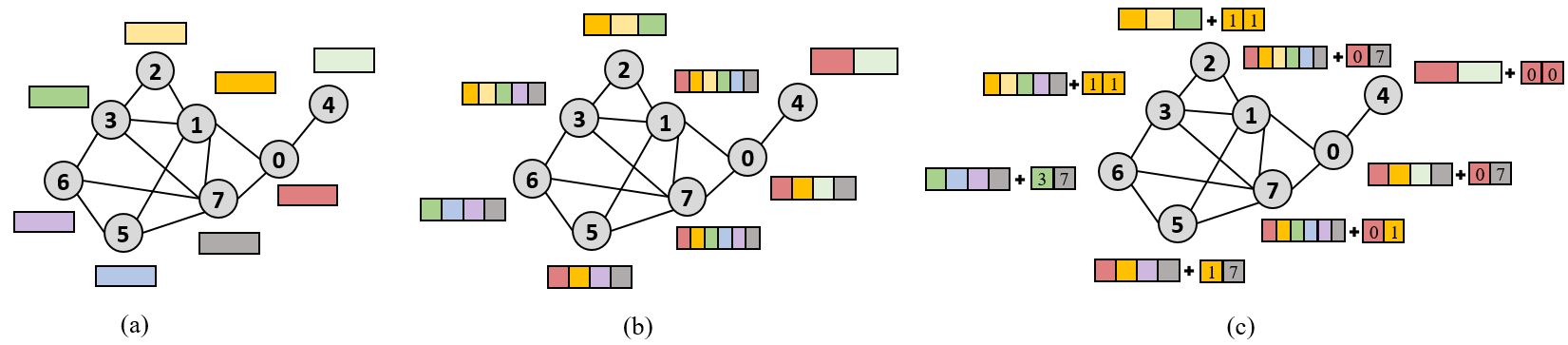}
    \centering
    \caption{(a) The weights before communication are represented as colored blocks, where different colors correspond to different workers. (b) Previous methods simply average the training model with neighbors. Each colored block denotes the identity of workers whose parameters were taken to compute the average. (c) To illustrate AL-DSGD, we assume that the higher is the index of the worker, the worse is its performance in this iteration. For each node, in addition to averaging with neighboring models, AL-DSGD assigns additional weights to the best performing adjacent model and the maximum degree adjacent model. This is depicted as the sum, where the additional block has two pieces (the left corresponds to the best performing adjacent model and the right corresponds to the maximum degree adjacent model; the indexes of these models are also provided). For example, in the case of model $2$, both the best-performing adjacent model and the maximum degree adjacent model is model $1$.}
    \label{Fig: 3}
\end{figure*}

We present the Adjacent Leader Decentralized Gradient Descent(AL-DSGD) in Algorithm ~\ref{Alg: 2} and discuss it in details below. A \textit{\textbf{motivation example}} for Algorithm ~\ref{Alg: 2} can be found in figure~\ref{Fig: 4}. A \textit{\textbf{visualization}} of step 6 in Algorithm ~\ref{Alg: 2} can be found in Figure~\ref{Fig: 3} and a \textit{\textbf{visualization}} of step 7 (dynamic communication graphs) in Algorithm ~\ref{Alg: 2} can be found in Figure~\ref{Fig: 11}. Let $k$ denote the iteration index ($k = 1,2,\dots,K$) and $i$ denote the index of the worker ($i = 1,2,\dots,m$). Let $x_{k,i}^N$ denote the best performing worker from among the workers adjacent to node $i$ at iteration $k$ and let $x_{k,i}^\tau$ denote the maximum degree worker from among the workers adjacent to node $i$ at iteration $k$. Let $\{G_{(i)}\}_{i=1}^n$ denote the dynamic communication graphs, each communication graph $\{G_{(i)}\}$has its own weight matrices sequence $\{W^{(k)}\}$. In the upcoming paragraphs of this section, we present novel contributions corresponding to Algorithm~\ref{Alg: 2}.

\textit{\textbf{The Corrective Force:}} To address the problem of detrimental influence of low degree nodes on the entire learning system, we first increase the influence of the workers with better performance ($x_{k,i}^N$) and larger degrees ($x_{k,i}^\tau$). Specifically, in the communication step, workers send their model parameters, local training loss, and local degree to their neighbors. At the end of the communication, we determine the adjacent best worker ($x_{k,i}^N$) based on training loss and the adjacent maximum degree worker ($x_{k,i}^\tau$) based on local degree for each node. Then, at training we introduce an additional corrective "force" pushing workers to their adjacent nodes with the largest degrees and the lowest train loss according to the following:

\vspace{-0.2in}
\begin{equation}
\begin{split}
x_{k+\frac{1}{2}, i}\! &=\! x_{k, i}\!- \!\gamma \nabla F_i(x_{k, i}; \xi_{k, i}) \\
&- \gamma \lambda_N (x_{k, i} - x_{k,i}^N) - \gamma \lambda_{\tau} (x_{k, i} - x_{k,i}^\tau) \nonumber
\end{split}
\end{equation}

where $\lambda_N$ and $\lambda_{\tau}$ are pulling coefficients, $\gamma$ is the learning rate, and $\nabla F_i(x_{k, i}; \xi_{k, i})$ is the gradient of the loss function for worker $i$ computed on parameters $x_{k,i}$ and local data sample $\xi_{k,i}$ that is seen by worker $i$ at iteration $k$. This update is done in step 5 of the Algorithm~\ref{Alg: 2}. Finally, note that \textit{no additional computations or communication steps are required} to acquire the information about the best performing adjacent worker or the maximum degree adjacent worker for a given node. This is because during the training process we compute the training losses, and furthermore each worker must be aware of its degree and the degree of the adjacent worker before communication. Figure~\ref{Fig: 4} intuitively illustrates that adding the corrective force can accelerate the convergence rate of the worker with low degree and worse performance. 

\textbf{\textit{The Averaging Step:}} Secondly, when averaging workers, we weight them according to their degree and performance (see step 6). This is done according to the formula:

\vspace{-0.3in}
\begin{equation}
\begin{split}
x_{k + 1, i} \!= &(1 - w_N - w_{\tau}) \cdot (W_{ii}^{(k)} x_{k+\frac{1}{2},i} \!+\!\!\sum_{j=1,j\ne i}^m W_{ij}^{(k)}x_{k,j})\\
&+ w_N \cdot x_{k,i}^N + w_{\tau} \cdot x_{k,i}^\tau \nonumber
\end{split}
\end{equation}

We visualize the process in Figure~\ref{Fig: 3}. Take node $0$ as an example. Figure \ref{Fig: 3}(a) to \ref{Fig: 3}(b) shows the classical communication step in vanilla decentralized SGD algorithms. Since the node $7$, that is adjacent to node $0$, has the largest degree and node $0$ itself is the best-performance worker, we increase the weight of node $7$ and node $0$ in Figure \ref{Fig: 3}(c).

\textbf{\textit{The Dynamic Communication Graph:}} Third, instead of relying on a single communication graph, we introduce $n$ graphs with different topologies and switch between them (see Figure~\ref{Fig: 11} for the special case of $n=3$). The total degree of each graph is the same as for the baseline. When we switch the graph, we are indeed altering the physical network topology. Since distributed machine learning models are trained on data center clusters, where workers can communicate with each other, \textit{switching communication graph won't lead to additional training time.}
We randomly choose a graph to start with and switch between different graphs after each training iteration. 
By using the dynamic communication graph, the workers are connected to more neighbors. This is done without increasing the total degree of the graph. This allows us to balance the expected degree of each node and avoid the poor performance of nodes with extremely low degrees.

Finally, we would like to emphasize that ALD-SGD is a meta-scheme that can be put on the top of any decentralized SGD method, including D-PSGD and MATCHA.
\vspace{-0.15in}
\begin{figure}[t]
    \centering
    \includegraphics[width=30mm]{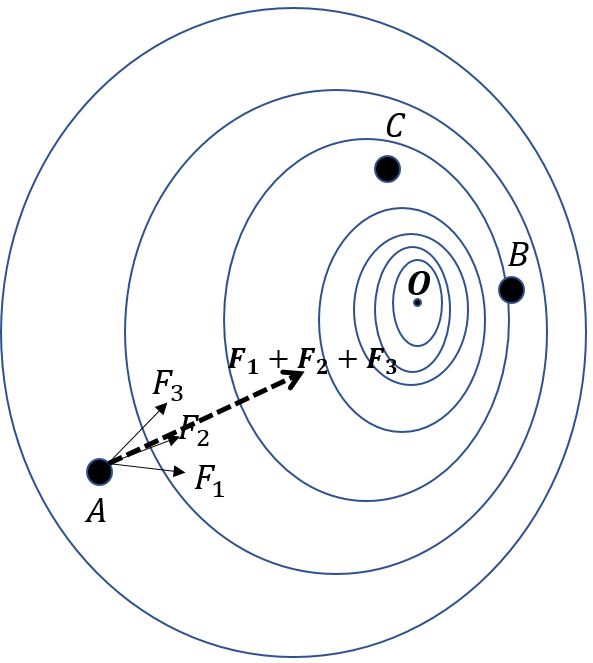}
    \centering
    \vspace{-0.1in}
    \caption{
    Motivating example: In Algorithm~\ref{Alg: 2} step 5, Point A represents a worker model with low degree and poor performance. $F_1$ is the data batch gradient, $F_2$ is the corrective force from the best performing adjacent worker, and $F_3$ is the corrective force from the adjacent worker with the highest degree. Point B represents the best performing adjacent node to A, while Point C represents the adjacent node with the maximum degree. Point O represents the optimum. Note that $F_1 + F_2 + F_3$ directs to the optimum, highlights the benefit of corrective force in optimization.}
    \label{Fig: 4}
\end{figure}
\begin{figure}[H]
    \centering
    \includegraphics[width=80mm]{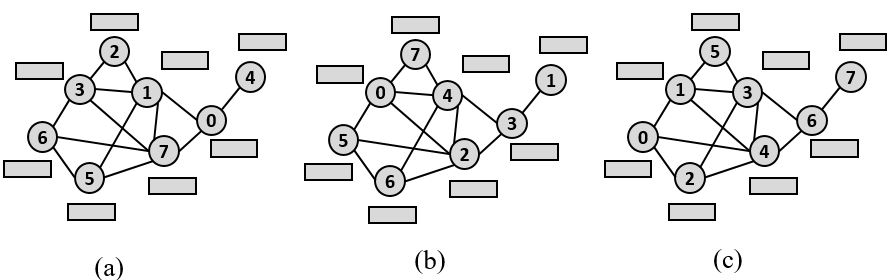}
    \centering
    \vspace{-0.15in}
    \caption{AL-DSGD with three Laplacian matrices rotates workers locations between (a), (b), and (c).}
    \label{Fig: 11}
    \vspace{-0.15in}
\end{figure}

\section{Theoretical analysis}
\label{sec:Theory}

This section offers theoretical analysis for the proposed AL-DSGD algorithm. As a meta-scheme adaptable to various decentralized SGD methods, our analysis focuses on embedding AL-DSGD with the MATCHA core (where D-PSGD is a MATCHA special case, making our analysis broadly applicable). The structure is as follows: Section \ref{sec:5_1} establishes the update formula for AL-DSGD atop MATCHA, incorporating a dynamic communication graph. Notably, this subsection includes a convergence theorem. In Section \ref{sec:5_2}, we demonstrate that, subject to specified assumptions, a range of hyperparameters $\alpha$, $\omega_N$, $\omega_\tau$, pull coefficients $\lambda_N$ and $\lambda_\tau$, along with the learning rate $\eta$, exist, resulting in AL-DSGD achieving a sublinear convergence rate. Thus our theoretical guarantee matches that of MATCHA and D-PSGD and shows that by using dynamic graphs we do not loose in terms of convergence compared to these schemes.

\vspace{-0.1in}
\subsection{Averaged weight matrix}\label{sec:5_1}

From Algorithm\ref{Alg: 2}, the model average step for our AL-DSGD algorithm is:
\vspace{-0.25in}
\begin{align}\label{eq:update}
    x_{k+1,i}=&\overbrace{(1-\omega_N-\omega_\tau)\sum_{j=1}^mW_{ij}^{(k)}x_{k,j}+\omega_Nx_{k,i}^n+\omega_\tau x_{k,i}^\tau}^{(I)}\notag\\
    &-\gamma(1-\omega_N-\omega_\tau)W_{ii}^{(k)}\left[\right.\nabla F_i(x_{k, i}; \xi_{k, i}) \nonumber\\
    &+ \lambda_N (x_{k, i} - x_{k,i}^N) + \lambda_{\tau} (x_{k, i} - x_{k,i}^\tau)\left.\right]
\end{align}
In this section, we only consider part (I) in formula (\ref{eq:update}) without the gradient update step. We define $\widetilde{x}_{k+\frac{1}{2},i}:=(I)$, and 
denote $X_k=[x_{k,1},x_{k,2},...,x_{k,m}]$, $\widetilde{X}_{k+\frac{1}{2}}=[\widetilde{x}_{k+\frac{1}{2},1},\widetilde{x}_{k+\frac{1}{2},2},...,\widetilde{x}_{k+\frac{1}{2},m}]$, $X_k^N\!=\![x_{k,1}^N,x_{k,2}^N,...,x_{k,m}^N]$ and $X_k^\tau=[x_{k,1}^\tau,x_{k,2}^\tau,...,x_{k,m}^\tau]$, we have
\begin{align}\label{eq:x2}
    \widetilde{X}_{k+\frac{1}{2}}&=X_k\widetilde{W}^{(k)}\notag\\
    \widetilde{W}^{(k)}&=(1-\omega_N-\omega_\tau)W^{(k)}+\omega_NA_k^N+\omega_\tau A_k^\tau\notag\\
    W^{(k)}&=1-\alpha L^{(k)}.
\end{align}
\\
where $L^{(k)}$ denotes the graph Laplacian matrix at the $k^\text{th}$ iteration, $X_k^N$ and $X_k^\tau$ are the model parameter matrix of the adjacent best workers and adjacent maximum degree workers at the $k^\text{th}$ iteration. Assume $X_k^N=X_kA_k^N$, $X_k^\tau=X_kA_k^\tau$. Since every row in $X_k^N$ and $X_k^\tau$ is also a row of the model parameter matrix $X_k$, we could conclude that the transformation matrices $A_k^N$ and $A_k^\tau$ must be the left stochastic matrices.



AL-DSGD switches between $n$ communication graphs $\{G_{(i)}\}_{i=1}^n$. Let $\{L_{(i),j}\}_{j=1}^m$ be the Laplacian matrices set as matching decomposition of graph $G_{(i)}$. Led by MATCHA approach, to each matching of $L_{(i),j}$ to graph $G_{(i)}$ we assign an independent Bernoulli random variable $B_{(i),j}$ with probability $p_{(i),j}$ based on the communication budget $c_b$. 
Then the graph Laplacian matrix at the $k^\text{th}$ iteration $L^{(k)}$ can be written as:~$\sum_{j=1}^mB^{(k)}_{(1),j}L_{(1),j} \text{(if $k$ mod $n = 1$)}$, $\sum_{j=1}^mB^{(k)}_{(2),j}L_{(2),j} \text{(if $k$ mod $n = 2)$}$,\ldots, $\sum_{j=1}^mB^{(k)}_{(n),j}L_{(n),j} ~\text{(if $k$ mod $n = 0)$}$.
The next theorem captures the convergence of the AL-DSGD algorithm.

\begin{theorem}\label{thm:1}
    Let $\{L^{(k)}\}$ denote the sequence of Laplacian matrix generated by AL-DSGD algorithm with arbitrary communication budget $c_b>0$ for the dynamic communication graph set $\{G_{(i)}\}_{i=1}^n$. Let the mixing matrix $\widetilde{W}^{(k)}$ be defined as in Equation~\ref{eq:x2}). There exists a range of $\alpha$ and a range of average parameters $\omega_N=\omega_\tau\in(0,\omega(\alpha))$, whose bound is dictated by $\alpha$, such that the spectral norm $\rho=\max\left\{\norm{\mathbb{E}\left[\widetilde{W}^{(k)}(I\!-\!J)\widetilde{W}^{(k)\intercal}\right]}\!, \norm{\mathbb{E}\left[\widetilde{W}^{(k)}\widetilde{W}^{(k)\intercal}\right]}\right\}\!<\!1$, where $J=\mathbf{1}\mathbf{1}^\intercal/m$.
\end{theorem}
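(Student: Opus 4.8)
The plan is to first record the algebraic structure of $\widetilde{W}^{(k)}$, then establish the bound in the pure-MATCHA base case $\omega_N=\omega_\tau=0$, and finally obtain the full statement by a perturbation argument in the averaging weights together with a maximization over the $n$ graphs of the dynamic set. I would begin by fixing, for a given iteration, the active graph $G_{(i)}$ (with $i\equiv k\bmod n$) and recording three structural facts. Since $L^{(k)}$ is a graph Laplacian it is symmetric with $L^{(k)}\mathbf{1}=\mathbf{0}$, so $W^{(k)}=I-\alpha L^{(k)}$ is symmetric and doubly stochastic, giving $W^{(k)}J=JW^{(k)}=J$; its eigenvalues lie in $[1-\alpha\lambda_{\max},1]$, so $\alpha\in(0,2/\lambda_{\max})$ keeps $\|W^{(k)}\|\le 1$. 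Because every column of $X_k^N$ (resp. $X_k^\tau$) is a column of $X_k$, the matrices $A_k^N,A_k^\tau$ are $0/1$ column-selection matrices, hence column stochastic, which yields $\mathbf{1}^\intercal\widetilde{W}^{(k)}=\mathbf{1}^\intercal$ and therefore $J\widetilde{W}^{(k)}=J$. Crucially, $A_k^N,A_k^\tau$ are generally \emph{not} doubly stochastic, so $\widetilde{W}^{(k)}J\ne J$ and $\widetilde{W}^{(k)}$ is not symmetric; I would only use that each lives in the compact set of $0/1$ column-stochastic matrices, for which $\|A_k^N\|,\|A_k^\tau\|\le\sqrt{m}$ (the squared norm equals the largest multiplicity of a selected index).

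For the base case $\omega_N=\omega_\tau=0$ I would invoke the MATCHA spectral-gap argument. Here $\widetilde{W}^{(k)}=W^{(k)}$, and using $W^{(k)}J=JW^{(k)}=J$ one has $\mathbb{E}[W^{(k)}(I-J)W^{(k)\intercal}]=\mathbb{E}[(W^{(k)})^2]-J$. Taking expectation over the Bernoulli matchings gives $\mathbb{E}[(W^{(k)})^2]=(I-\alpha\bar L)^2+\alpha^2\big(\mathbb{E}[(L^{(k)})^2]-\bar L^2\big)$ with $\bar L=\sum_j p_{(i),j}L_{(i),j}$; since $c_b>0$ forces all $p_{(i),j}>0$ and each $G_{(i)}$ is connected, $\bar L$ has a positive spectral gap $\lambda_2(\bar L)>0$ on $\mathbf{1}^{\perp}$. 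Balancing the mean contraction $(1-\alpha\lambda_2(\bar L))^2$ on $\mathbf{1}^{\perp}$ against the $\alpha^2$ variance term (exactly as in MATCHA) yields a nonempty interval of $\alpha$ for which $\rho_0:=\max_i\|\mathbb{E}[W^{(k)}(I-J)W^{(k)\intercal}]\|<1$, the maximum being over the $n$ residue classes.

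Next I would turn on the averaging weights and argue by perturbation in $\beta:=\omega_N+\omega_\tau$. Substituting $\widetilde{W}^{(k)}=(1-\beta)W^{(k)}+\omega_N A_k^N+\omega_\tau A_k^\tau$ and expanding $\widetilde{W}^{(k)}(I-J)\widetilde{W}^{(k)\intercal}$, the leading term is $(1-\beta)^2$ times the base-case quantity, and every remaining term carries at least one factor of $\omega_N$ or $\omega_\tau$ and a product of matrices whose spectral norm is bounded by a constant $C=C(m,\alpha)$ (using $\|W^{(k)}\|\le1$, $\|A_k^N\|,\|A_k^\tau\|\le\sqrt m$, $\|I-J\|\le1$). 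The triangle inequality then gives, per residue class, $\|\mathbb{E}[\widetilde{W}^{(k)}(I-J)\widetilde{W}^{(k)\intercal}]\|\le(1-\beta)^2\rho_0+C\beta$, and I would treat the second quantity of the theorem as the companion disagreement-subspace operator (i.e. restricted to $\mathrm{range}(I-J)$, since the column-stochasticity $\widetilde{W}^{(k)\intercal}\mathbf{1}=\mathbf{1}$ forces the unrestricted $\|\mathbb{E}[\widetilde{W}^{(k)}\widetilde{W}^{(k)\intercal}]\|\ge1$) and bound it identically. Setting $\omega(\alpha):=(1-\rho_0)/(2C)$ and imposing $\omega_N=\omega_\tau\in(0,\omega(\alpha))$, so $\beta<(1-\rho_0)/C$, keeps both expressions strictly below $1$. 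Finally, because the distribution of $\widetilde{W}^{(k)}$ depends on $k$ only through $i\equiv k\bmod n$, I would take $\rho$ to be the maximum of the two bounds over the finitely many classes $i=1,\dots,n$; each is $<1$, and a maximum of finitely many numbers below $1$ is below $1$, which is the claim.

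\textbf{Main obstacle.} The crux is the perturbation step. Because $A_k^N,A_k^\tau$ are only column stochastic, the identities $\widetilde{W}^{(k)}J=J$ and symmetry of $\widetilde{W}^{(k)}$ fail, so the clean reduction $\widetilde{W}^{(k)}(I-J)\widetilde{W}^{(k)\intercal}=(\widetilde{W}^{(k)})^2-J$ is unavailable and $\mathbb{E}[\widetilde{W}^{(k)}\widetilde{W}^{(k)\intercal}]$ is not automatically a contraction on the whole space. Controlling the cross terms uniformly in spectral norm—while respecting that $A_k^N,A_k^\tau$ are random and statistically coupled to the iterates $X_k$—is the delicate part; the way around it is to discard all distributional information about $A_k^N,A_k^\tau$ and bound them by the worst case over the compact set of $0/1$ column-stochastic matrices, which is precisely what makes the constant $C(m,\alpha)$, and hence $\omega(\alpha)$, well defined.
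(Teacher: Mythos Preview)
Your overall strategy coincides with the paper's: expand $\widetilde W^{(k)}(I-J)\widetilde W^{(k)\intercal}$ into the $(1-\omega)^2$ MATCHA piece plus cross terms, invoke the MATCHA spectral-gap bound for the leading piece, control the cross terms via the crude estimates $\|W^{(k)}\|\le 1$ and $\|A^{(k)}\|\le\sqrt m$, and then choose $\omega$ small (as a function of $\alpha$) so that the $O(\omega)$ perturbation does not destroy the gap. The paper carries out exactly this computation with explicit constants, arriving at $\rho\le[(1-\omega)(1-\alpha\lambda)+2\sqrt m\,\omega]^2+4\omega\sqrt m+2\alpha^2\zeta$, which it then optimizes in $\alpha$ and $\omega$.

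Where you part ways with the paper is on the second quantity $\|\mathbb E[\widetilde W^{(k)}\widetilde W^{(k)\intercal}]\|$, and here you are right to flag a problem. Your observation that column-stochasticity forces $\widetilde W^{(k)\intercal}\mathbf 1=\mathbf 1$, hence the Rayleigh quotient of $\mathbb E[\widetilde W^{(k)}\widetilde W^{(k)\intercal}]$ at $\mathbf 1$ equals $1$ and the unrestricted spectral norm is $\ge 1$, is correct. The paper does not notice this: in bounding $\|\mathbb E[W^{(k)\intercal}W^{(k)}]\|$ it asserts $\|(I-\alpha\bar L)^2\|=\max\{(1-\alpha\lambda_2)^2,(1-\alpha\lambda_m)^2\}$, silently omitting the Laplacian eigenvalue $\lambda_1=0$ (which contributes the eigenvalue $1$), and thereby obtains a bound strictly below $1$ when the true norm is at least $1$. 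Your proposed repair---read the second term as the operator restricted to $\mathrm{range}(I-J)$---is sensible, but it changes the theorem as stated, and you should also verify that the restricted version is what is actually needed downstream: the paper's auxiliary lemma for the convergence proof applies $\|\mathbb E[\widetilde W^{(k)}\widetilde W^{(k)\intercal}]\|$ to vectors not a priori orthogonal to $\mathbf 1$.
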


Theorem \ref{thm:1} states that for arbitrary communication budget $c_b$ there exists some $\alpha$, $\omega_N$ and $\omega_\tau$ such that the spectral norm $\rho<1$, which is a necessary condition for AL-DSGD to converge.

\subsection{Convergence guarantee}\label{sec:5_2}
This section provides the convergence guarantee for the proposed AL-DSGD algorithm. We define the average iterate as $\overline{x}^{(k)}=\frac{1}{m}\sum_{i=1}^mx_i^{(k)}$ and the minimum of the loss function as $F^*$. This section demonstrates that the averaged gradient norm $\frac{1}{K}\sum_{k=1}^K\mathbb{E}[\norm{\nabla F(\overline{x}^{(k)})}]$ converges to zero with sublinear convergence rate.

\begin{assumption}\label{asm:1}
We assume that the loss function $F(x)=\sum_{i=1}^mF_i(x)$ satisfy the following conditions:
\begin{itemize}
    \item[(1)]\textit{Lipschitz continuous:} $\norm{F_i(x)-F_i(y)}\leq\beta\norm{x-y}$\\
    \item[(2)]\textit{Lipschitz gradient:} $\norm{\nabla F_i(x)\!-\!\nabla F_i(y)}\!\leq\!L\norm{x\!-\!y}$\\
    \item[(3)]\textit{Unbiased gradient:} 
    $\mathbb{E}_{\xi_i}[g_i(x_k;\xi_i)]=\nabla F_i(x)$\\
    \item[(4)]\textit{Bounded variance:} $\mathbb{E}_{\xi_i}[\norm{g_i(x_k;\xi_i)-\nabla F_i(x)}^2]\!\leq\!\sigma^2$\\
    \item[(5)]\textit{Unified gradient:} $\mathbb{E}_{\xi_i}[\norm{\nabla F_i(x)-\nabla F_i(x)}^2]\leq\zeta^2$\\
    \item[(6)]\textit{Bounded domain:}\!\! $\max\{\|x_{k,i}\!\!-\!x_{k,i}^N\|,\!\|x_{k,i}\!\!-\!x_{k,i}^\tau\|\}\!\leq\!\Delta^2$.
\end{itemize}
\end{assumption}

\begin{theorem}\label{thm:2}
    Suppose all local workers are initialized with $\overline{x}^{(1)}=0$ and $\{\widetilde{W}^{(k)}\}_{k=1}^K$ is an i.i.d matrix sequence generated by AL-DSGD algorithm which satisfies the spectral norm $\rho<1$ ($\rho$ is defined in Section \ref{sec:5_1}). Under Assumption \ref{asm:1}, if $\lambda=2\lambda_N=2\lambda_\tau$ and $(1-\alpha)(1-\omega)\gamma L\leq\min\{1,(\sqrt{\rho^{-1}}-1)\}$, then after K iterations:
    \begin{align*}
     \frac{1}{K}\sum_{i=1}^K\mathbb{E}\left[\norm{\nabla F(\overline{x}_k)}^2\right] \leq\frac{8(F(\overline{x}_{1})-F^*)}{\eta K}+\frac{8M}{\eta} \\
     +\frac{8\eta^2L^2\rho}{1-\sqrt{\rho}}\left(\frac{m\sigma^2+\lambda^2\Delta^2}{m(1+\sqrt{\rho})}+\frac{3\zeta^2}{1-\sqrt{\rho}}\right),
    \end{align*}
    
    where $\eta=(1-\alpha)(1-\omega)\gamma$ and $M=\frac{\eta^2L\sigma^2}{2m}+\lambda\eta\beta\Delta+\lambda\eta^2L\beta \Delta+\frac{\lambda^2\eta^2L\Delta^2}{2}$. When setting $\lambda=\sqrt{\frac{m}{K}}$, $\gamma=\sqrt{\frac{m}{(1-\omega)(1-\alpha)K}}$, we obtain sublinear convergence rate.
\end{theorem}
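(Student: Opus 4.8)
The plan is to adapt the MATCHA convergence template to the two extra corrective-force terms and to the fact that $\widetilde{W}^{(k)}$ is only column-stochastic. First I would track the average iterate $\overline{x}^{(k)}=\tfrac1m X_k\mathbf{1}$. Rewriting $\omega_N x_{k,i}^N=\omega_N x_{k,i}+\omega_N(x_{k,i}^N-x_{k,i})$ (and likewise for $\tau$), the averaging map splits into the doubly stochastic piece $(1-\omega)W^{(k)}+\omega I$ plus residual pull terms. Because $W^{(k)}$ is doubly stochastic and $A_k^N,A_k^\tau$ are left-stochastic, $\mathbf{1}^\intercal\widetilde{W}^{(k)}=\mathbf{1}^\intercal$, equivalently $J\widetilde{W}^{(k)}=J$ with $J=\mathbf{1}\mathbf{1}^\intercal/m$; this single fact both pins down the average recursion and, later, makes the deviation recursion contract. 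Averaging the full update from Equation~\ref{eq:update} then yields
\[
\overline{x}^{(k+1)}=\overline{x}^{(k)}-\frac{\gamma(1-\omega)}{m}\sum_{i=1}^m W_{ii}^{(k)}\,g_i(x_{k,i};\xi_{k,i})+R_k,
\]
where the diagonal weights $W_{ii}^{(k)}=1-\alpha L_{ii}^{(k)}$ supply the $(1-\alpha)$ factor so the effective step is $\eta=(1-\alpha)(1-\omega)\gamma$, and $R_k$ collects the averaging drift $\tfrac{\omega_N}{m}\sum_i(x_{k,i}^N-x_{k,i})+\tfrac{\omega_\tau}{m}\sum_i(x_{k,i}^\tau-x_{k,i})$ together with the $\lambda_N,\lambda_\tau$ pull terms. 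By the Lipschitz-continuity and bounded-domain conditions in Assumption~\ref{asm:1}(1),(6), every piece of $R_k$ is of order $\lambda\Delta$ (with $\lambda=2\lambda_N=2\lambda_\tau$), which is exactly what produces the $\lambda\eta\beta\Delta$, $\lambda\eta^2L\beta\Delta$ and $\lambda^2\eta^2L\Delta^2$ summands inside $M$.

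Second, I would apply the $L$-smoothness descent lemma from Assumption~\ref{asm:1}(2) to $F(\overline{x}^{(k)})$, take the conditional expectation, and use unbiasedness (3) to replace $g_i$ by $\nabla F_i$. The inner-product term is split via $\inprod{a}{b}=\tfrac12(\norm{a}^2+\norm{b}^2-\norm{a-b}^2)$ to expose $-\tfrac{\eta}{2}\norm{\nabla F(\overline{x}^{(k)})}^2$; the gap between $\tfrac1m\sum_i\nabla F_i(x_{k,i})$ and $\nabla F(\overline{x}^{(k)})$ is controlled by smoothness in terms of the consensus error $\tfrac1m\sum_i\norm{x_{k,i}-\overline{x}^{(k)}}^2$, and the quadratic remainder is bounded with the variance bound (4), giving the $\eta^2L\sigma^2/(2m)$ contribution. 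All cross terms involving $R_k$ are dominated using (1) and (6).

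Third, and this is the crux, I would bound the accumulated consensus error $\sum_k\mathbb{E}\norm{X_k(I-J)}_F^2$. Writing the matrix recursion $X_{k+1}(I-J)=X_k\widetilde{W}^{(k)}(I-J)-\gamma(1-\omega)G_k(I-J)$ and using $J\widetilde{W}^{(k)}=J$ to get $X_k\widetilde{W}^{(k)}(I-J)=X_k(I-J)\widetilde{W}^{(k)}(I-J)$, the spectral estimate of Theorem~\ref{thm:1} yields $\mathbb{E}\norm{X_k\widetilde{W}^{(k)}(I-J)}_F^2\le\rho\,\norm{X_k(I-J)}_F^2$. Unrolling the recursion and summing a geometric series of ratio $\sqrt{\rho}$ gives the $\tfrac{\rho}{1-\sqrt\rho}$ and $\tfrac{1}{1+\sqrt\rho}$ factors; the condition $\eta L\le\sqrt{\rho^{-1}}-1$ is precisely what keeps the gradient-induced growth from destroying the contraction (so that $(1+\eta L)\sqrt\rho\le1$). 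The per-step injected noise splits into a variance-type part $m\sigma^2+\lambda^2\Delta^2$ (from (4) and the pull deviations via (6), paired with $\tfrac{1}{1+\sqrt\rho}$) and a heterogeneity part $\zeta^2$ (from (5), paired with $\tfrac{1}{1-\sqrt\rho}$), assembling into the final bracketed term.

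Finally, I would sum the descent inequality over $k=1,\dots,K$, telescope $\sum_k\big(F(\overline{x}^{(k)})-F(\overline{x}^{(k+1)})\big)\le F(\overline{x}^{(1)})-F^*$, substitute the consensus bound, and invoke $\eta L\le1$ to keep the coefficient of $\norm{\nabla F(\overline{x}^{(k)})}^2$ bounded below (by $\tfrac{\eta}{8}$ after collecting constants); dividing by $K$ gives the stated inequality. Setting $\lambda=\sqrt{m/K}$ and $\gamma=\sqrt{m/((1-\omega)(1-\alpha)K)}$ makes $\eta=\sqrt{(1-\alpha)(1-\omega)m/K}$, so the first term is $O(1/\sqrt{mK})$, the $M/\eta$ terms are at worst $O(\sqrt{m/K})$, and the consensus term is $O(m/K)$, giving the advertised sublinear $O(1/\sqrt{K})$ rate. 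The main obstacle I anticipate is the consensus step: unlike vanilla MATCHA the mixing matrix is only column-stochastic, so the average is not preserved and the corrective forces feed $\lambda\Delta$-sized noise into both the average recursion and the deviation recursion at once; keeping these two couplings consistent through the $\sqrt\rho$-geometric series, while verifying that the relation $\eta L\le\min\{1,\sqrt{\rho^{-1}}-1\}$ simultaneously certifies descent and contraction, is the delicate part.
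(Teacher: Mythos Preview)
Your plan follows the same skeleton as the paper: apply the $L$-smoothness descent lemma to $F(\overline{x}_k)$, bound the cross terms and the gradient-variance term to produce $M$, control the consensus error $\norm{X_k(I-J)}_F^2$ by unrolling the recursion and summing a geometric series in $\sqrt{\rho}$, then telescope and clear the constant in front of $\norm{\nabla F(\overline{x}_k)}^2$ using the step-size condition. One point to correct: you assert that $\mathbf{1}^\intercal\widetilde{W}^{(k)}=\mathbf{1}^\intercal$ ``pins down the average recursion.'' That is the wrong side. The average $\overline{x}_k=\tfrac1m X_k\mathbf{1}$ is preserved under $X_k\mapsto X_k\widetilde{W}^{(k)}$ only if $\widetilde{W}^{(k)}\mathbf{1}=\mathbf{1}$ (row-stochasticity), which \emph{fails} here because $A^{(k)}\mathbf{1}\neq\mathbf{1}$. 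Your earlier decomposition into the doubly-stochastic piece $(1-\omega)W^{(k)}+\omega I$ plus the residual $\omega(A^{(k)}-I)X_k$, absorbed into $R_k$ and bounded by $\Delta$ via Assumption~\ref{asm:1}(6), is what actually fixes the average recursion; the paper does the analogous thing by lumping the averaged pull $(X_k-C_k)\mathbf{1}/m$ into a single $\Delta^{(k)}$ term with $\norm{\Delta^{(k)}}\le\Delta$. So your conclusion is right, only the stated reason is not.

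The genuine methodological difference is in the consensus contraction. You propose to use $J\widetilde{W}^{(k)}=J$ to insert an $(I-J)$ between every successive mixing factor and then contract each $\widetilde{W}^{(k)}(I-J)$ by $\|\mathbb{E}[\widetilde{W}^{(k)}(I-J)\widetilde{W}^{(k)\intercal}]\|$. The paper does \emph{not} use that identity; instead it proves a product lemma (Lemma~\ref{lma:1}) showing $\mathbb{E}\bigl[\norm{B\bigl(\prod_{l}\widetilde{W}^{(l)}\bigr)(I-J)}_F^2\bigr]\le\rho^n\norm{B}_F^2$, where the \emph{last} factor of $\rho$ comes from $\|\mathbb{E}[\widetilde{W}(I-J)\widetilde{W}^\intercal]\|$ and all earlier factors from $\|\mathbb{E}[\widetilde{W}\widetilde{W}^\intercal]\|$. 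This is precisely why $\rho$ in Theorem~\ref{thm:1} is defined as the maximum of both quantities. Your $(I-J)$-insertion route is valid and arguably cleaner (it would only need the first spectral norm), and it leads to the same bound since $\rho$ already dominates both; but it is a different mechanism from the paper's, so if you follow the paper literally you should expect to invoke the second half of the definition of $\rho$ rather than the column-stochasticity identity.
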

Note that all assumptions in Assumption \ref{asm:1} are commonly used assumptions for decentralized distributed optimization \cite{lian2017can,wang2019matcha,li2017convergence}.
    $(1-\alpha)(1-\omega) L\leq\min\{1,(\sqrt{\rho^{-1}}-1)\}$ is a weak assumption on the learning rate and resembles similar assumption in Theorem 2 in \cite{wang2019matcha}. Note that we give an exact value for the upper-bound on $\rho$ in Appendix \ref{app:thm_1}, which implies that under certain choices of $\alpha$, $\omega_N$, and $\omega_\tau$, $\rho$ could be much smaller than 1 and the right-hand side of the bound is therefore not approaching 0. Moreover, Assumption \ref{asm:1} (1) guarantees the Lipschitz constant for the loss objective function, and constructing learning rate based on the Lipschtz constant is widely used in many convergence proofs \cite{wu2018wngrad,mhammedi2019lipschitz,yedida2021lipschitzlr}.

\begin{figure*}[t!]
    \centering
    \includegraphics[width=150mm]{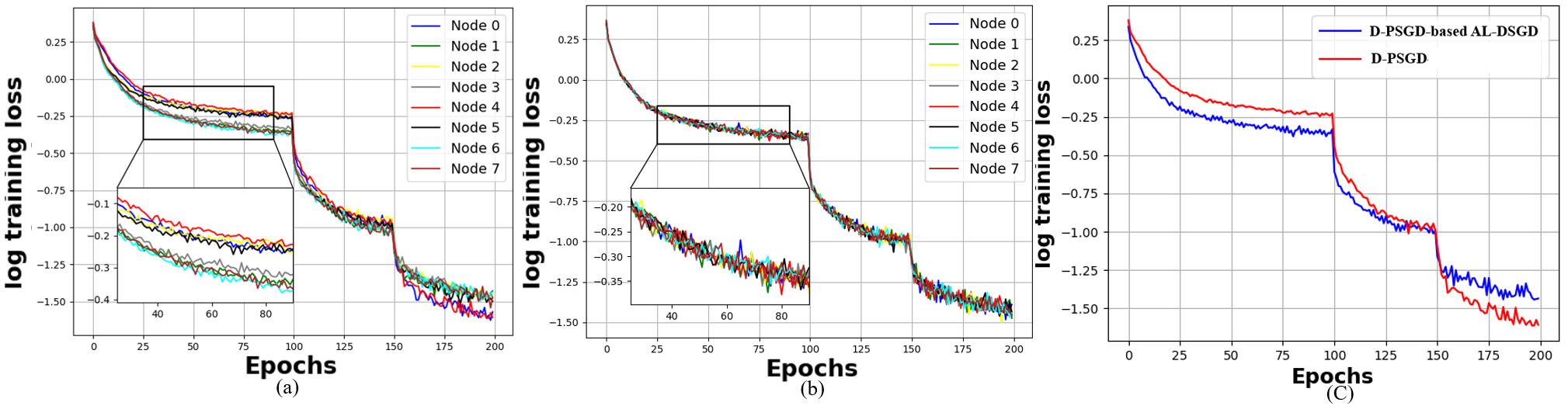}
    \centering
    \vspace{-0.15in}
    \caption{Training loss behavior for ResNet-50 trained on CIFAR-10. Optimization schemes: (a) D-PSGD, (b) D-PSGD-based AL-DSGD (c): Comparison between worst performing workers from a and b.}
    \label{Fig: 6}
\end{figure*}

\begin{figure*}[t!]
    \centering
    \includegraphics[width=150mm]{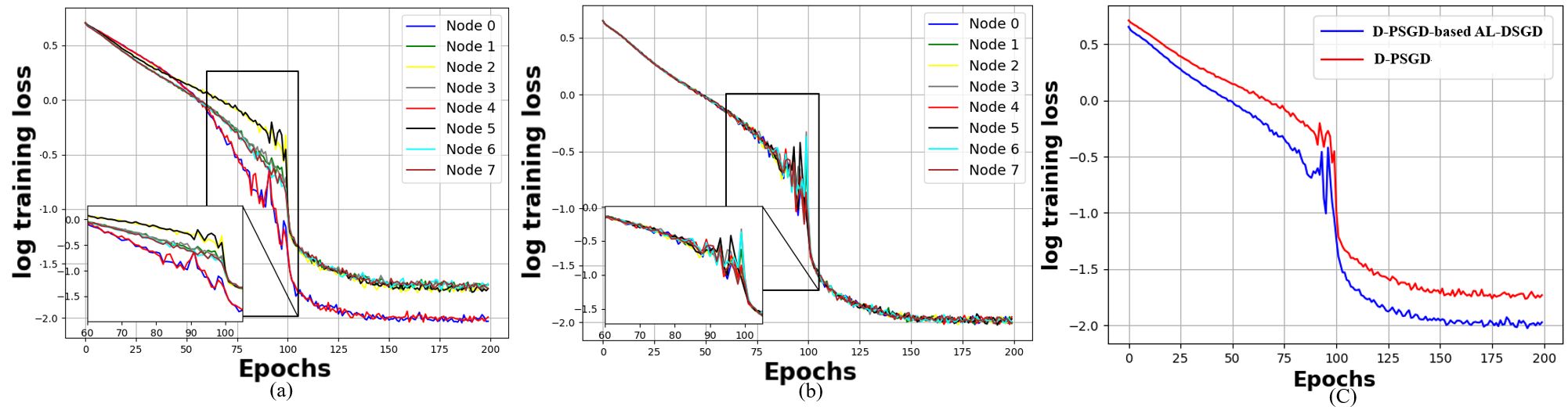}
    \centering
    \caption{Training loss behavior for WideResNet trained on CIFAR-100. Optimization schemes: (a) D-PSGD, (b) D-PSGD-based AL-DSGD  (c): Comparison between worst performing workers from a and b.}
    \label{Fig: 5}
\end{figure*}
\begin{figure*}[t!]
    \centering
    \includegraphics[width=150mm]{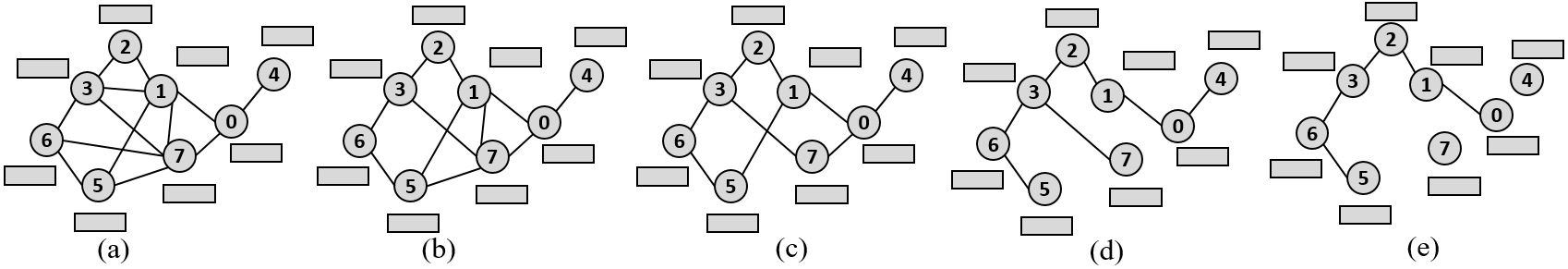}
    \centering
    \caption{(a) Graph with total degree $D = 13, d = 100\%$. (b) Graph with reduced degree $d = 84.6\%, D = 11$, (c) Graph with reduced degree $d = 69.2\%, D = 9$, (d) Graph with reduced degree $d = 53.8\%, D = 7$, (e) Graph with reduced degree $d = 38.5\%, D = 5$.}
    \label{Fig: 12}
\end{figure*}
\section{Experimental results}
\label{sec:Experiments}
This section is devoted to the empirical evaluation of the proposed AL-DSGD scheme.

\textbf{Datasets and models:} 
In our experiments, we employ ResNet-50 and Wide ResNet models. The models are trained on CIFAR-10 and CIFAR-100~\citep{krizhevsky2009learning}. The same architectures and data sets were used by our competitor methods, MATCHA and D-PSGD. The training data set is randomly and evenly partitioned over a network of workers (each worker sees all the classes and the number of samples per classes are the same across all the workers). In the decentralized synchronous setting, a pre-round barrier addresses computational speed variations (straggler issue) caused by hardware and data sampling differences. Slower workers naturally wait for faster ones to complete training before synchronization. This aligns with our baselines~\citep{wang2019matcha, lian2017can}. 


\textbf{Machines/Clusters:} All the implementations are done in PyTorch and OpenMPI within mpi4py. We conduct experiments on a HPC cluster with 100Gbit/s network. In all of our experiments, we use RTX8000 GPU as workers.

\textbf{AL-DSGD and Competitors:} We implemented the proposed AL-DSGD with pulling coefficients $\lambda_N = 0.1$ and $\lambda_{\tau} = 0.1$. We set model weights coefficients to $w_N = 0.1$ and $w_\tau = 0.1$. The pulling coefficients and the model weights are fine-tuned for the ALD-SGD-based D-PSGD with ResNet-50 trained on CIFAR-10 and then used for other experiments.
We compared our algorithm with the D-PSGD and MATCHA methods, where in case of MATCHA the communication budget $c_b$ was set to $c_b = 0.5$, as recommended by the authors.


\textbf{Implementations:} All algorithms are trained for a sufficiently long time until convergence or onset of over-fitting. The learning rate is fine-tuned for the D-PSGD baseline and then used for MATCHA and AL-DSGD algorithm. The initial learning rate is 0.4 and reduced by a factor of 10 after 100 and 150 epochs. The batch size per worker node is 128. 

\begin{figure*}[t!]
    \centering
    \includegraphics[width=120mm]{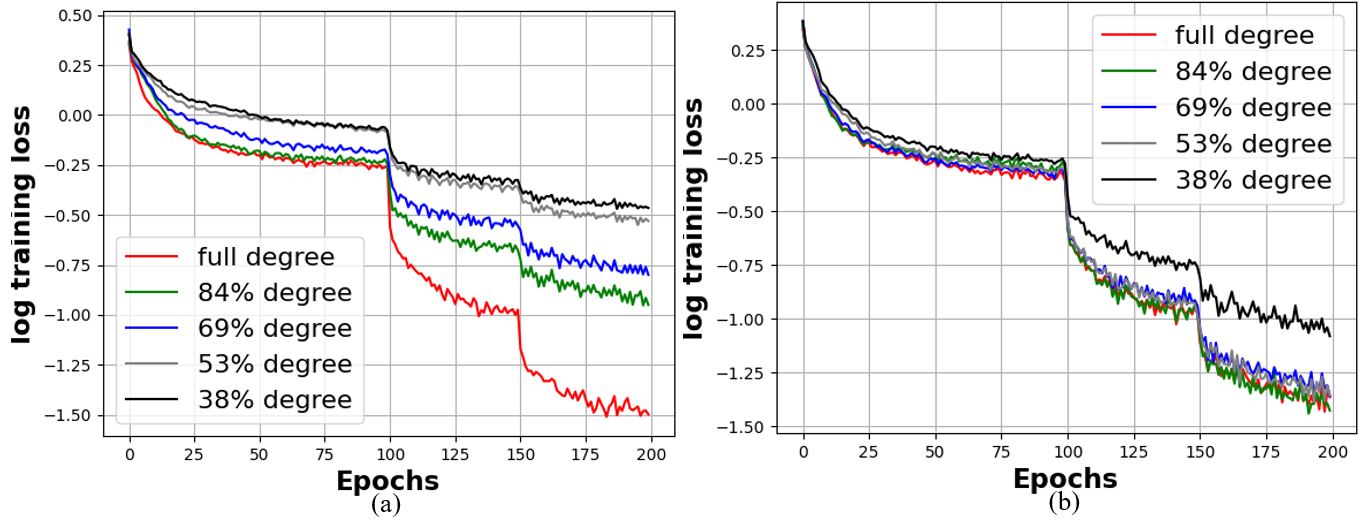}
    \centering
    \vspace{-0.1in}
    \caption{ResNet-50 model trained on CIFAR-10. Performance of a) D-PSGD and b) AL-DSGD-based D-PSGD with different topology degrees.}
    \label{Fig: 9}
\end{figure*}

\subsection{Convergence and performance}

\begin{table}[t]
\centering
\resizebox{\columnwidth}{!}{%
\setlength{\tabcolsep}{1.2mm}{
\begin{tabular}{lllllllll}
\hline
\multicolumn{1}{c}{\textbf{}}       &\multicolumn{2}{c|}{\textbf{CIFAR-10/ResNet-50}}      &\multicolumn{2}{c}{\textbf{CIFAR-100/WideResNet}}      \\ 
\hline
\multicolumn{1}{c}{\textbf{Node}}   &\multicolumn{1}{c}{\textbf{D-PSGD}}    &\multicolumn{1}{c|}
{\textbf{AL-DSGD}}    &\multicolumn{1}{c}{\textbf{D-PSGD}}    &\multicolumn{1}{c}{\textbf{AL-DSGD}}    \\ \hline
\multicolumn{1}{c}{0}               &\multicolumn{1}{c}{\textbf{87.95}$\downarrow$} &\multicolumn{1}{c|}{93.68}      &\multicolumn{1}{c}{\textbf{59.45}$\downarrow$} &\multicolumn{1}{c}{76.18}          \\ 
\multicolumn{1}{c}{1}               &\multicolumn{1}{c}{92.11} &\multicolumn{1}{c|}{93.72}   &\multicolumn{1}{c}{74.70} &\multicolumn{1}{c}{76.35}\\ 
\multicolumn{1}{c}{2}               &\multicolumn{1}{c}{92.21} &\multicolumn{1}{c|}{93.55}   &\multicolumn{1}{c}{74.65} &\multicolumn{1}{c}{76.10}\\ 
\multicolumn{1}{c}{3}               &\multicolumn{1}{c}{92.36} &\multicolumn{1}{c|}{93.87}   &\multicolumn{1}{c}{74.16} &\multicolumn{1}{c}{76.36}\\ 
\multicolumn{1}{c}{4}               &\multicolumn{1}{c}{\textbf{87.86}$\downarrow$} &\multicolumn{1}{c|}{93.83}  &\multicolumn{1}{c}{\textbf{59.63}$\downarrow$} &\multicolumn{1}{c}{76.51}\\ 
\multicolumn{1}{c}{5}               &\multicolumn{1}{c}{92.25} &\multicolumn{1}{c|}{93.48}   &\multicolumn{1}{c}{74.62} &\multicolumn{1}{c}{76.34}\\ 
\multicolumn{1}{c}{6}               &\multicolumn{1}{c}{92.38} &\multicolumn{1}{c|}{93.65}   &\multicolumn{1}{c}{74.59} &\multicolumn{1}{c}{76.39}                 \\ 
\multicolumn{1}{c}{7}               &\multicolumn{1}{c}{92.32} &\multicolumn{1}{c|}{93.62}   &\multicolumn{1}{c}{74.57} &\multicolumn{1}{c}{76.30}                 \\ \hline
\multicolumn{1}{c}{\textbf{AVG}}               &\multicolumn{1}{c}{91.18} &\multicolumn{1}{c|}{\textbf{93.68}}   &\multicolumn{1}{c}{70.79} &\multicolumn{1}{c}{\textbf{76.31}}                 \\ \hline
\end{tabular}} 
}

\caption{\label{tab:widgets} Test accuracy obtained with D-PSGD and D-PSGD-based AL-DSGD for ResNet-50 model trained on CIFAR-10 and WideResNet model trained on CIFAR-100.}
\label{Tab:2}
\end{table}

The results from training models with D-PSGD and D-PSGD-based AL-DSGD (AL-DSGD on top of D-PSGD) are in Table~\ref{Tab:2}. MATCHA and MATCHA-based AL-DSGD results are in Appendix~\ref{Appendix: 4}, Table~\ref{Tab:10}. Training loss, shown in Figure~\ref{Fig: 6} and \ref{Fig: 5}, assesses convergence better due to unstable test loss. AL-DSGD reduces variance between nodes and speeds up convergence for the worst-performing node. 

\textbf{\textit{In summary, AL-DSGD has enhanced the test accuracy for both the average and worst-performing models.}} Tables~\ref{Tab:2} and~\ref{Tab:10} reveal AL-DSGD's superior generalization over other methods. For the case of D-PSGD, the test accuracy has increased by resp. $5.8\%$ and $16.7\%$ in the worst-performance worker and by resp. $2.1\%$ and $5.5\%$ in the final averaged model for CIFAR-10/ResNet50 and CIFAR-100/WideResNet tasks, respectively, when putting AL-DSGD on the top of D-PSGD. For the case of MATCHA, even though the AL-DSGD does not dramatically increase the baseline performance for CIFAR-10/ResNet50 task because the model is relatively simple, it strongly outperforms the baseline on more complicated CIFAR100/WideResNet task.  As shown in Figure~\ref{Fig: 13} in the Appendix~\ref{Appendix: 4}, AL-DSGD demonstrates more stable (i.e.,  smaller discrepancies  between nodes) and faster convergence compared to MATCHA. 

We would like to further emphasize that, except for converging to a better optimum, another significant advantage of our AL-DSGD algorithm is that it is much more robust to imbalanced and sparse topology, as will be discussed in the following section.
\vspace{-0.05in}

\subsection{Communication}

In this subsection, we evaluate algorithm performance using ResNet-50 on CIFAR-10 in a communication-constrained environment. \textit{\textbf{The experiments overall demonstrate that AL-DSGD has enhanced test accuracy in scenarios involving either imbalanced or sparse topologies.}} Our approach relies on a dynamic communication graph with three Laplacian matrices (Figure \ref{Fig: 11}). The results are shown here, and the results using two Laplacian matrices are in Appendix~\ref{Appendix: 3}. Tables \ref{Tab:2} and Table \ref{Tab:10}(in Appendix~\ref{Appendix: 2}) demonstrate that D-PSGD-based AL-DSGD and MATCHA-based AL-DSGD are more robust to imbalanced topologies. To further evaluate AL-DSGD in communication-limited environments, we gradually reduce the communication graph's degree to simulate sparse topology (Figure~\ref{Fig: 12}) and compare AL-DSGD's performance with D-PSGD (Figure~\ref{Fig: 9}). AL-DSGD remains stable and robust to sparse topologies, as its performance does not significantly decrease until the degree is reduced to 38\%, while D-PSGD performs poorly even when the degree is only decreased to 84\%. 

\begin{table}[t!]
\centering
\resizebox{\columnwidth}{!}{%
\setlength{\tabcolsep}{0.35mm}{
\begin{tabular}{lllllllll}
\hline
\multicolumn{1}{c}{\textbf{}}       &\multicolumn{5}{c}{\textbf{TEST ACCURACY}}      \\ \hline
\multicolumn{1}{c}{\textbf{Algorithm}}   &\multicolumn{1}{c}{D=13}    &\multicolumn{1}{c}{D=11}    &\multicolumn{1}{c}{D=9}    &\multicolumn{1}{c}{D=7}    &\multicolumn{1}{c}{D=5}        \\ \hline
\multicolumn{1}{c}{\textbf{D-PSGD}}               &\multicolumn{1}{c}{91.18} &\multicolumn{1}{c}{91.21}   &\multicolumn{1}{c}{90.98}    &\multicolumn{1}{c}{90.26}    &\multicolumn{1}{c}{89.59}    \\ 
\multicolumn{1}{c}{\textbf{AL-DSGD-based D-PSGD}}               &\multicolumn{1}{c}{\textbf{93.68}} &\multicolumn{1}{c}{\textbf{93.59}}   &\multicolumn{1}{c}{\textbf{93.58}}    &\multicolumn{1}{c}{\textbf{93.30}}    &\multicolumn{1}{c}{\textbf{92.32}}   \\ \hline
\multicolumn{1}{c}{\textbf{MATCHA}}               &\multicolumn{1}{c}{93.65} &\multicolumn{1}{c}{\textbf{93.51}}   &\multicolumn{1}{c}{93.24}    &\multicolumn{1}{c}{92.86}    &\multicolumn{1}{c}{91.14}    \\  
\multicolumn{1}{c}{\textbf{AL-DSGD-based MATCHA}}              &\multicolumn{1}{c}{\textbf{93.94}} &\multicolumn{1}{c}{93.33}   &\multicolumn{1}{c}{\textbf{93.49}}    &\multicolumn{1}{c}{\textbf{93.30}}   &\multicolumn{1}{c}{\textbf{92.75}}    \\  \hline
\end{tabular}} 
}
\caption{\label{tab:widgets} The performance of D-PSGD, MATCHA, AL-DSGD-based D-PSGD, and AL-DSGD-based MATCHA for topolgies with different total degrees $D$. Results were obtained for CIFAR10 and ResNet-50. 
}
\label{Tab:7}
\end{table}

Finally, Table~\ref{Tab:7} includes the comparison of D-PSGD, MATCHA, and AL-DSGD. We applied AL-DSGD on the top of both the D-PSGD and MATCHA baselines and compared the results. Table~\ref{Tab:7} further confirms the claim that the AL-DSGD algorithm is highly robust to sparse topologies, as it consistently achieves better test accuracy compared to the baseline algorithms, D-PSGD and MATCHA, for nearly all the cases, particularly in sparse topology scenarios.
More details can be found in Appendix~\ref{Appendix: 2} \&~\ref{Appendix: 4}.

\section{Conclusions}
\label{sec:Conclusions}
This paper introduces Adjacent Leader Decentralized Gradient Descent (AL-DSGD), a novel decentralized distributed SGD algorithm. 
AL-DSGD assigns weights to neighboring learners based on their performance and degree for averaging and integrates corrective forces from best-performing and highest-degree neighbors during training. 
By employing a dynamic communication graph, AL-DSGD excels in communication-constrained scenarios, including imbalanced and sparse topologies. 
Theoretical proof of algorithm convergence is provided. Experimental results on various datasets and deep architectures demonstrate that AL-DSGD accelerates and stabilizes convergence of decentralized state-of-the-art techniques, improving test performance, especially in communication-constrained environments.



\bibliography{ecai-sample-and-instructions}

\begin{thebibliography}{50}
\providecommand{\natexlab}[1]{#1}
\providecommand{\url}[1]{\texttt{#1}}
\expandafter\ifx\csname urlstyle\endcsname\relax
  \providecommand{\doi}[1]{doi: #1}\else
  \providecommand{\doi}{doi: \begingroup \urlstyle{rm}\Url}\fi

\bibitem[Alghunaim and Yuan(2022)]{alghunaim2022unified}
S.~A. Alghunaim and K.~Yuan.
\newblock A unified and refined convergence analysis for non-convex decentralized learning.
\newblock \emph{IEEE Transactions on Signal Processing}, 70:\penalty0 3264--3279, 2022.

\bibitem[Benjamini et~al.(2014)Benjamini, Kozma, and Wormald]{benjamini2014mixing}
I.~Benjamini, G.~Kozma, and N.~Wormald.
\newblock The mixing time of the giant component of a random graph.
\newblock \emph{Random Structures \& Algorithms}, 45\penalty0 (3):\penalty0 383--407, 2014.

\bibitem[Berahas et~al.(2018)Berahas, Bollapragada, Keskar, and Wei]{berahas2018balancing}
A.~S. Berahas, R.~Bollapragada, N.~S. Keskar, and E.~Wei.
\newblock Balancing communication and computation in distributed optimization.
\newblock \emph{IEEE Transactions on Automatic Control}, 64\penalty0 (8):\penalty0 3141--3155, 2018.

\bibitem[Blot et~al.(2016)Blot, Picard, Cord, and Thome]{blot2016gossip}
M.~Blot, D.~Picard, M.~Cord, and N.~Thome.
\newblock Gossip training for deep learning.
\newblock \emph{arXiv preprint arXiv:1611.09726}, 2016.

\bibitem[Brown et~al.(2020)Brown, Mann, Ryder, Subbiah, Kaplan, Dhariwal, Neelakantan, Shyam, Sastry, Askell, et~al.]{brown2020language}
T.~Brown, B.~Mann, N.~Ryder, M.~Subbiah, J.~D. Kaplan, P.~Dhariwal, A.~Neelakantan, P.~Shyam, G.~Sastry, A.~Askell, et~al.
\newblock Language models are few-shot learners.
\newblock \emph{Advances in neural information processing systems}, 33:\penalty0 1877--1901, 2020.

\bibitem[Chen et~al.(2021)Chen, Yuan, Zhang, Pan, Xu, and Yin]{chen2021accelerating}
Y.~Chen, K.~Yuan, Y.~Zhang, P.~Pan, Y.~Xu, and W.~Yin.
\newblock Accelerating gossip sgd with periodic global averaging.
\newblock In \emph{International Conference on Machine Learning}, pages 1791--1802. PMLR, 2021.

\bibitem[Chow et~al.(2016)Chow, Shi, Wu, and Yin]{chow2016expander}
Y.-T. Chow, W.~Shi, T.~Wu, and W.~Yin.
\newblock Expander graph and communication-efficient decentralized optimization.
\newblock In \emph{2016 50th Asilomar Conference on Signals, Systems and Computers}, pages 1715--1720. IEEE, 2016.

\bibitem[Cui et~al.(2014)Cui, Cipar, Ho, Kim, Lee, Kumar, Wei, Dai, Ganger, Gibbons, et~al.]{cui2014exploiting}
H.~Cui, J.~Cipar, Q.~Ho, J.~K. Kim, S.~Lee, A.~Kumar, J.~Wei, W.~Dai, G.~R. Ganger, P.~B. Gibbons, et~al.
\newblock Exploiting bounded staleness to speed up big data analytics.
\newblock In \emph{2014 USENIX Annual Technical Conference (USENIX ATC 14)}, pages 37--48, 2014.

\bibitem[Dean and Barroso(2013)]{dean2013tail}
J.~Dean and L.~A. Barroso.
\newblock The tail at scale.
\newblock \emph{Communications of the ACM}, 56\penalty0 (2):\penalty0 74--80, 2013.

\bibitem[Dean et~al.(2012)Dean, Corrado, Monga, Chen, Devin, Mao, Ranzato, Senior, Tucker, Yang, et~al.]{dean2012large}
J.~Dean, G.~Corrado, R.~Monga, K.~Chen, M.~Devin, M.~Mao, M.~Ranzato, A.~Senior, P.~Tucker, K.~Yang, et~al.
\newblock Large scale distributed deep networks.
\newblock \emph{Advances in neural information processing systems}, 25, 2012.

\bibitem[Dekel et~al.(2012)Dekel, Gilad-Bachrach, Shamir, and Xiao]{dekel2012optimal}
O.~Dekel, R.~Gilad-Bachrach, O.~Shamir, and L.~Xiao.
\newblock Optimal distributed online prediction using mini-batches.
\newblock \emph{Journal of Machine Learning Research}, 13\penalty0 (1), 2012.

\bibitem[Deng et~al.(2009)Deng, Dong, Socher, Li, Li, and Fei-Fei]{deng2009imagenet}
J.~Deng, W.~Dong, R.~Socher, L.-J. Li, K.~Li, and L.~Fei-Fei.
\newblock Imagenet: A large-scale hierarchical image database.
\newblock In \emph{2009 IEEE conference on computer vision and pattern recognition}, pages 248--255. Ieee, 2009.

\bibitem[Devlin et~al.(2018)Devlin, Chang, Lee, and Toutanova]{devlin2018bert}
J.~Devlin, M.-W. Chang, K.~Lee, and K.~Toutanova.
\newblock Bert: Pre-training of deep bidirectional transformers for language understanding.
\newblock \emph{arXiv preprint arXiv:1810.04805}, 2018.

\bibitem[Duchi et~al.(2011)Duchi, Agarwal, and Wainwright]{duchi2011dual}
J.~C. Duchi, A.~Agarwal, and M.~J. Wainwright.
\newblock Dual averaging for distributed optimization: Convergence analysis and network scaling.
\newblock \emph{IEEE Transactions on Automatic control}, 57\penalty0 (3):\penalty0 592--606, 2011.

\bibitem[Dutta et~al.(2016)Dutta, Cadambe, and Grover]{dutta2016short}
S.~Dutta, V.~Cadambe, and P.~Grover.
\newblock Short-dot: Computing large linear transforms distributedly using coded short dot products.
\newblock \emph{Advances In Neural Information Processing Systems}, 29, 2016.

\bibitem[Ghadimi and Lan(2013)]{ghadimi2013stochastic}
S.~Ghadimi and G.~Lan.
\newblock Stochastic first-and zeroth-order methods for nonconvex stochastic programming.
\newblock \emph{SIAM Journal on Optimization}, 23\penalty0 (4):\penalty0 2341--2368, 2013.

\bibitem[He and Dube(2022{\natexlab{a}})]{he2022accelerating}
H.~He and P.~Dube.
\newblock Accelerating parallel stochastic gradient descent via non-blocking mini-batches.
\newblock \emph{arXiv preprint arXiv:2211.00889}, 2022{\natexlab{a}}.

\bibitem[He and Dube(2022{\natexlab{b}})]{he2022rcd}
H.~He and P.~Dube.
\newblock Rcd-sgd: Resource-constrained distributed sgd in heterogeneous environment via submodular partitioning.
\newblock \emph{arXiv preprint arXiv:2211.00839}, 2022{\natexlab{b}}.

\bibitem[Huang and Pu(2022)]{huang2022improving}
K.~Huang and S.~Pu.
\newblock Improving the transient times for distributed stochastic gradient methods.
\newblock \emph{IEEE Transactions on Automatic Control}, 2022.

\bibitem[Jakovetic et~al.(2018)Jakovetic, Bajovic, Sahu, and Kar]{jakovetic2018convergence}
D.~Jakovetic, D.~Bajovic, A.~K. Sahu, and S.~Kar.
\newblock Convergence rates for distributed stochastic optimization over random networks.
\newblock In \emph{2018 IEEE Conference on Decision and Control (CDC)}, pages 4238--4245. IEEE, 2018.

\bibitem[Jin et~al.(2016)Jin, Yuan, Iandola, and Keutzer]{jin2016scale}
P.~H. Jin, Q.~Yuan, F.~Iandola, and K.~Keutzer.
\newblock How to scale distributed deep learning?
\newblock \emph{arXiv preprint arXiv:1611.04581}, 2016.

\bibitem[Koloskova et~al.(2020)Koloskova, Loizou, Boreiri, Jaggi, and Stich]{koloskova2020unified}
A.~Koloskova, N.~Loizou, S.~Boreiri, M.~Jaggi, and S.~Stich.
\newblock A unified theory of decentralized sgd with changing topology and local updates.
\newblock In \emph{International Conference on Machine Learning}, pages 5381--5393. PMLR, 2020.

\bibitem[Kong et~al.(2021)Kong, Lin, Koloskova, Jaggi, and Stich]{kong2021consensus}
L.~Kong, T.~Lin, A.~Koloskova, M.~Jaggi, and S.~Stich.
\newblock Consensus control for decentralized deep learning.
\newblock In \emph{International Conference on Machine Learning}, pages 5686--5696. PMLR, 2021.

\bibitem[Krizhevsky et~al.(2009)Krizhevsky, Hinton, et~al.]{krizhevsky2009learning}
A.~Krizhevsky, G.~Hinton, et~al.
\newblock Learning multiple layers of features from tiny images.
\newblock 2009.

\bibitem[Li et~al.(2017)Li, Fang, and Lin]{li2017convergence}
H.~Li, C.~Fang, and Z.~Lin.
\newblock Convergence rates analysis of the quadratic penalty method and its applications to decentralized distributed optimization.
\newblock \emph{arXiv preprint arXiv:1711.10802}, 2017.

\bibitem[Li et~al.(2014)Li, Andersen, Smola, and Yu]{li2014communication}
M.~Li, D.~G. Andersen, A.~J. Smola, and K.~Yu.
\newblock Communication efficient distributed machine learning with the parameter server.
\newblock \emph{Advances in Neural Information Processing Systems}, 27, 2014.

\bibitem[Li et~al.(2020)Li, Zhao, Varma, Salpekar, Noordhuis, Li, Paszke, Smith, Vaughan, Damania, et~al.]{li2020pytorch}
S.~Li, Y.~Zhao, R.~Varma, O.~Salpekar, P.~Noordhuis, T.~Li, A.~Paszke, J.~Smith, B.~Vaughan, P.~Damania, et~al.
\newblock Pytorch distributed: Experiences on accelerating data parallel training.
\newblock \emph{arXiv preprint arXiv:2006.15704}, 2020.

\bibitem[Lian et~al.(2017)Lian, Zhang, Zhang, Hsieh, Zhang, and Liu]{lian2017can}
X.~Lian, C.~Zhang, H.~Zhang, C.-J. Hsieh, W.~Zhang, and J.~Liu.
\newblock Can decentralized algorithms outperform centralized algorithms? a case study for decentralized parallel stochastic gradient descent.
\newblock \emph{Advances in Neural Information Processing Systems}, 30, 2017.

\bibitem[Lian et~al.(2018)Lian, Zhang, Zhang, and Liu]{lian2018asynchronous}
X.~Lian, W.~Zhang, C.~Zhang, and J.~Liu.
\newblock Asynchronous decentralized parallel stochastic gradient descent.
\newblock In \emph{International Conference on Machine Learning}, pages 3043--3052. PMLR, 2018.

\bibitem[Liu et~al.(2020)Liu, Zhang, Mroueh, Cui, Ross, Yang, and Das]{liu2020decentralized}
M.~Liu, W.~Zhang, Y.~Mroueh, X.~Cui, J.~Ross, T.~Yang, and P.~Das.
\newblock A decentralized parallel algorithm for training generative adversarial nets.
\newblock \emph{Advances in Neural Information Processing Systems}, 33:\penalty0 11056--11070, 2020.

\bibitem[Mhammedi et~al.(2019)Mhammedi, Koolen, and Van~Erven]{mhammedi2019lipschitz}
Z.~Mhammedi, W.~M. Koolen, and T.~Van~Erven.
\newblock Lipschitz adaptivity with multiple learning rates in online learning.
\newblock In \emph{Conference on Learning Theory}, pages 2490--2511. PMLR, 2019.

\bibitem[Nedic and Ozdaglar(2009)]{nedic2009distributed}
A.~Nedic and A.~Ozdaglar.
\newblock Distributed subgradient methods for multi-agent optimization.
\newblock \emph{IEEE Transactions on Automatic Control}, 54\penalty0 (1):\penalty0 48--61, 2009.

\bibitem[Nedi{\'c} et~al.(2018)Nedi{\'c}, Olshevsky, and Rabbat]{nedic2018network}
A.~Nedi{\'c}, A.~Olshevsky, and M.~G. Rabbat.
\newblock Network topology and communication-computation tradeoffs in decentralized optimization.
\newblock \emph{Proceedings of the IEEE}, 106\penalty0 (5):\penalty0 953--976, 2018.

\bibitem[Radford et~al.(2021)Radford, Kim, Hallacy, Ramesh, Goh, Agarwal, Sastry, Askell, Mishkin, Clark, et~al.]{radford2021learning}
A.~Radford, J.~W. Kim, C.~Hallacy, A.~Ramesh, G.~Goh, S.~Agarwal, G.~Sastry, A.~Askell, P.~Mishkin, J.~Clark, et~al.
\newblock Learning transferable visual models from natural language supervision.
\newblock In \emph{International Conference on Machine Learning}, pages 8748--8763. PMLR, 2021.

\bibitem[Scaman et~al.(2018)Scaman, Bach, Bubeck, Massouli{\'e}, and Lee]{scaman2018optimal}
K.~Scaman, F.~Bach, S.~Bubeck, L.~Massouli{\'e}, and Y.~T. Lee.
\newblock Optimal algorithms for non-smooth distributed optimization in networks.
\newblock \emph{Advances in Neural Information Processing Systems}, 31, 2018.

\bibitem[Tang et~al.(2018)Tang, Lian, Yan, Zhang, and Liu]{tang2018d}
H.~Tang, X.~Lian, M.~Yan, C.~Zhang, and J.~Liu.
\newblock $ d^2$: Decentralized training over decentralized data.
\newblock In \emph{International Conference on Machine Learning}, pages 4848--4856. PMLR, 2018.

\bibitem[Teng et~al.(2019)Teng, Gao, Chalus, Choromanska, Goldfarb, and Weller]{teng2019leader}
Y.~Teng, W.~Gao, F.~Chalus, A.~E. Choromanska, D.~Goldfarb, and A.~Weller.
\newblock Leader stochastic gradient descent for distributed training of deep learning models.
\newblock \emph{Advances in Neural Information Processing Systems}, 32, 2019.

\bibitem[Towfic et~al.(2016)Towfic, Chen, and Sayed]{towfic2016excess}
Z.~J. Towfic, J.~Chen, and A.~H. Sayed.
\newblock Excess-risk of distributed stochastic learners.
\newblock \emph{IEEE Transactions on Information Theory}, 62\penalty0 (10):\penalty0 5753--5785, 2016.

\bibitem[Wang and Joshi(2021)]{wang2021cooperative}
J.~Wang and G.~Joshi.
\newblock Cooperative sgd: A unified framework for the design and analysis of local-update sgd algorithms.
\newblock \emph{The Journal of Machine Learning Research}, 22\penalty0 (1):\penalty0 9709--9758, 2021.

\bibitem[Wang et~al.(2019{\natexlab{a}})Wang, Sahu, Yang, Joshi, and Kar]{wang2019matcha}
J.~Wang, A.~K. Sahu, Z.~Yang, G.~Joshi, and S.~Kar.
\newblock Matcha: Speeding up decentralized sgd via matching decomposition sampling.
\newblock In \emph{2019 Sixth Indian Control Conference (ICC)}, pages 299--300. IEEE, 2019{\natexlab{a}}.

\bibitem[Wang et~al.(2019{\natexlab{b}})Wang, Tantia, Ballas, and Rabbat]{wang2019slowmo}
J.~Wang, V.~Tantia, N.~Ballas, and M.~Rabbat.
\newblock Slowmo: Improving communication-efficient distributed sgd with slow momentum.
\newblock \emph{arXiv preprint arXiv:1910.00643}, 2019{\natexlab{b}}.

\bibitem[Wang et~al.(2022)Wang, Sahu, Joshi, and Kar]{wang2022matcha}
J.~Wang, A.~K. Sahu, G.~Joshi, and S.~Kar.
\newblock Matcha: A matching-based link scheduling strategy to speed up distributed optimization.
\newblock \emph{IEEE Transactions on Signal Processing}, 70:\penalty0 5208--5221, 2022.

\bibitem[Wu et~al.(2018)Wu, Ward, and Bottou]{wu2018wngrad}
X.~Wu, R.~Ward, and L.~Bottou.
\newblock Wngrad: Learn the learning rate in gradient descent.
\newblock \emph{arXiv preprint arXiv:1803.02865}, 2018.

\bibitem[Yedida et~al.(2021)Yedida, Saha, and Prashanth]{yedida2021lipschitzlr}
R.~Yedida, S.~Saha, and T.~Prashanth.
\newblock Lipschitzlr: Using theoretically computed adaptive learning rates for fast convergence.
\newblock \emph{Applied Intelligence}, 51:\penalty0 1460--1478, 2021.

\bibitem[Yuan et~al.(2016)Yuan, Ling, and Yin]{yuan2016convergence}
K.~Yuan, Q.~Ling, and W.~Yin.
\newblock On the convergence of decentralized gradient descent.
\newblock \emph{SIAM Journal on Optimization}, 26\penalty0 (3):\penalty0 1835--1854, 2016.

\bibitem[Yuan et~al.(2020)Yuan, Alghunaim, Ying, and Sayed]{yuan2020influence}
K.~Yuan, S.~A. Alghunaim, B.~Ying, and A.~H. Sayed.
\newblock On the influence of bias-correction on distributed stochastic optimization.
\newblock \emph{IEEE Transactions on Signal Processing}, 68:\penalty0 4352--4367, 2020.

\bibitem[Yuan et~al.(2021{\natexlab{a}})Yuan, Alghunaim, and Huang]{yuan2021removing}
K.~Yuan, S.~A. Alghunaim, and X.~Huang.
\newblock Removing data heterogeneity influence enhances network topology dependence of decentralized sgd.
\newblock \emph{arXiv preprint arXiv:2105.08023}, 2021{\natexlab{a}}.

\bibitem[Yuan et~al.(2021{\natexlab{b}})Yuan, Chen, Chen, Codella, Dai, Gao, Hu, Huang, Li, Li, et~al.]{yuan2021florence}
L.~Yuan, D.~Chen, Y.-L. Chen, N.~Codella, X.~Dai, J.~Gao, H.~Hu, X.~Huang, B.~Li, C.~Li, et~al.
\newblock Florence: A new foundation model for computer vision.
\newblock \emph{arXiv preprint arXiv:2111.11432}, 2021{\natexlab{b}}.

\bibitem[Zeng and Yin(2018)]{zeng2018nonconvex}
J.~Zeng and W.~Yin.
\newblock On nonconvex decentralized gradient descent.
\newblock \emph{IEEE Transactions on signal processing}, 66\penalty0 (11):\penalty0 2834--2848, 2018.

\bibitem[Zhang et~al.(2015)Zhang, Choromanska, and LeCun]{zhang2015deep}
S.~Zhang, A.~E. Choromanska, and Y.~LeCun.
\newblock Deep learning with elastic averaging sgd.
\newblock \emph{Advances in neural information processing systems}, 28, 2015.

\end{thebibliography}

\newpage
\appendix
\onecolumn
\hrule height 4pt
\vskip 0.15in
\vskip -\parskip
\begin{center}
{\LARGE\bf Adjacent Leader Decentralized Stochastic Gradient Descent \par} 
\end{center}
\vskip 0.25in
\vskip -\parskip
\hrule height 1pt

\section{Dynamic Communication Graphs}\label{Appendix: 1}
In this appendix, we apply the ablation studies to explain the dynamic communication graphs method is neccessary in AL-DSGD algorithm. We choose pulling coefficients $\lambda_N = 0.1$ and $\lambda_{\tau} = 0.1$. The results can be found in figure~\ref{Fig: 10}. We set model weights coefficients $w_N = 0.1$ and $w_\tau = 0.1$. Without dynamic communication graphs, when applying addictive force according to loss performance, the worker with worse performance and smaller local training loss will affect other workers. The final test accuracy of the worker with less degree did not improve. The results indicate that without dynamic communication graphs, ALD-SGD, which only applies corrective forces, does not achieve better performance. This is because the workers with low degree overfit and negatively affect the others. Similar results are observed when only one corrective force or only the dynamic communication graph is applied.

\noindent\begin{minipage}{\textwidth}
    \centering
    \includegraphics[width=140mm]{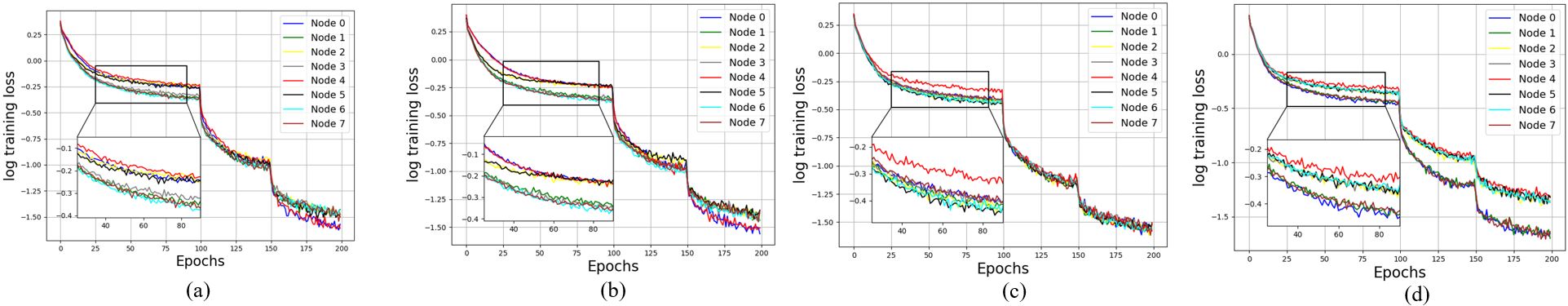}
    \centering
    \captionof{figure}{ResNet-50 model trained on CIFAR-10. Experiments to show AL-DSGD without dynamic communication graph. (a): D-PSGD. (b): D-PSGD based AL-DSGD without applying dynamic communication graph. (c):MATCHA. (d): MATCHA based AL-DSGD without applying dynamic communication graph.}
    \label{Fig: 10}
\end{minipage}


\section{Proof for Theorem \ref{thm:1}}\label{app:thm_1}
In this section, we are going to find a range of $\alpha$, and some averaging hyperparameter $\omega$, such that the spectral norm $\rho=\max\{\norm{\mathbb{E}\left[\widetilde{W}^{(k)}(I-J)\widetilde{W}^{(k)\intercal}\right]}, \norm{\mathbb{E}\left[\widetilde{W}^{(k)}\widetilde{W}^{(k)\intercal}\right]}\}$ is smaller than 1.

Recall the formula of mixing matrix $\widetilde{W}^{(k)}$:
\begin{align}\label{eq:W_tilde}
    \widetilde{W}^{(k)}&=(1-\omega_N-\omega_\tau)W^{(k)}+\omega_NA^{(k)N}+\omega_\tau A^{(k)\tau}.
\end{align}
Let $A^{(k)}=\frac{A^{(k)N}+A^{(k)\tau}}{2}$, $\omega=2\omega_N=2\omega_\tau$, we have
\begin{align}\label{eq:W_tilde2}
    \widetilde{W}^{(k)}&=(1-\omega)W^{(k)}+\omega A^{(k)},
\end{align}
where $A$ is still a left stochastic matrix. 
Therefore:
\begin{align}\label{eq:W1}
    \widetilde{W}^{(k)}(I-J)\widetilde{W}^{(k)\intercal}=\widetilde{W}^{(k)}\widetilde{W}^{(k)\intercal}-\widetilde{W}^{(k)}J\widetilde{W}^{(k)\intercal}
\end{align}
Since
\begin{align*}\label{eq:W2}
    \widetilde{W}^{(k)}\widetilde{W}^{(k)\intercal}=&\left[(1-\omega)W^{(k)}+\omega A^{(k)}\right]\left[(1-\omega)W^{(k)}+\omega A^{(k)}\right]^\intercal\notag\\
    =&(1-\omega)^2W^{(k)} W^{(k)\intercal}+\omega(1-\omega)W^{(k)} A^{(k)\intercal}+\omega(1-\omega)A^{(k)} W^{(k)\intercal}+\omega^2+\omega(1-\omega)A^{(k)} A^{(k)\intercal}\\
    \widetilde{W}^{(k)}J\widetilde{W}^{(k)\intercal}=&\left[(1-\omega)W^{(k)}J+\omega A^{(k)}J\right]\left[(1-\omega)W^{(k)}+\omega A^{(k)}\right]^\intercal\notag\\
    =&(1-\omega)^2W^{(k)}J W^{(k)\intercal}+\omega(1-\omega)W^{(k)}J A^{(k)\intercal}+\omega(1-\omega)A^{(k)}J W^{(k)\intercal}+\omega^2+\omega(1-\omega)A^{(k)} J A^{(k)\intercal}
\end{align*}
Since we know that $W^{(k)}$ is symmetric doubly stochastic matrix and $A^{(k)}$ is the left stochastic matrix, we know that $J=W^{(k)}J=JW^{(k)}$ and $J=JA^{(k)}\neq A^{(k)}J$. Putting (\ref{eq:W2}) back to (\ref{eq:W1}), we could get
\begin{align}
   \widetilde{W}^{(k)}(I-J)\widetilde{W}^{(k)\intercal}=&(1-\omega)^2\left[W^{(k)\intercal} W^{(k)}-J\right] +\omega(1-\omega)\left[W^{(k)} A^{(k)\intercal}-JA^{(k)\intercal}\right]\notag\\
   &+\omega(1-\omega)\left[A^{(k)} W^{(k)}-A^{(k)} J\right]+\omega^2\left[A^{(k)} A^{(k)\intercal}-A^{(k)} J A^{(k)\intercal}\right]
\end{align}
Therefore, we have
\begin{align}
   \norm{\mathbb{E}\left[\widetilde{W}^{(k)}(I-J)\widetilde{W}^{(k)\intercal}\right]}
   \leq&(1-\omega)^2\norm{\mathbb{E}\left[W^{(k)\intercal} W^{(k)}\right]-J}
   +2\omega(1-\omega)\norm{\mathbb{E}\left[W^{(k)} A^{(k)\intercal}-JA^{(k)\intercal}\right]}\notag\\
   &+\omega^2\norm{\mathbb{E}\left[A^{(k)} (I-J)A^{(k)\intercal}\right]}\label{eq:W_3}\\
   \norm{\mathbb{E}\left[\widetilde{W}^{(k)}\widetilde{W}^{(k)\intercal}\right]}
   \leq&(1-\omega)^2\norm{\mathbb{E}\left[W^{(k)\intercal} W^{(k)}\right]}
   +2\omega(1-\omega)\norm{\mathbb{E}\left[W^{(k)} A^{(k)\intercal}\right]}\notag\\
   &+\omega^2\norm{\mathbb{E}\left[A^{(k)}A^{(k)\intercal}\right]}\label{eq:W_32}
\end{align}

Firstly, We are going to bound each term in iequality (\ref{eq:W_3}) one by one.

\textbf{(1) Bound }$\norm{\mathbb{E}\left[W^{(k)\intercal} W^{(k)}\right]-J}$. 
\begin{align}\label{eq:norm_W_k}
    \norm{\mathbb{E}\left[W^{(k)\intercal} W^{(k)}\right]-J}=&\norm{\mathbb{E}\left[\left(I-\alpha L^{(k)}\right)^\intercal \left(I-\alpha L^{(k)}\right)\right]-J}\notag\\
    =&\norm{I-2\alpha\mathbb{E}\left[L^{(k)}\right]+\alpha^2\mathbb{E}\left[L^{(k)\intercal} L^{(k)}\right]-J}.
\end{align}
Recall there are two communication graph and $L^{(k)}$ is periodically switched between them:
$$ L^{(k)}=\left\{
\begin{aligned}
\sum_{j=1}^mB^{(k)}_{(1),j}L_{(1),j} \quad& \text{if $k$ mod $n = 1$}&\\
\sum_{j=1}^mB^{(k)}_{(2),j}L_{(2),j} \quad& \text{if $k$ mod $n = 2$}&\\
...&\\
\sum_{j=1}^mB^{(k)}_{(n),j}L_{(n),j} \quad& \text{if $k$ mod $n = 0$}&
\end{aligned}
\right.
$$
We analysis the case for $k$ mod $n=i$, where $i=1,2,...,n-1,0$. For notation convenience, we use $k$ mod $n=n$ instead of $k$ mod $n=0$ without loss of generality. Then the condition could be rewritten as $k$ mod $n=i$, where $i=1,2,...,n-1,n$. 

If $k$ mod $n=i$, then from Appendix B in \cite{wang2019matcha} we have
    \begin{align*} \mathbb{E}\left[L^{(k)}\right]&=\sum_{j=1}^mp_{(i),j}L_{(i),j}\\
    \mathbb{E}\left[L^{(k)\intercal} L^{(k)}\right]&=\left(\sum_{j=1}^mp_{(i),j}L_{(i),j}\right)^2+2\sum_{j=1}^Mp_{(i),j}(1-p_{(i),j})L_{(i),j}.
    \end{align*}
    And
    \begin{align}
    \norm{\mathbb{E}\left[W^{(k)\intercal} W^{(k)}\right]-J}\leq&\norm{\left(I-\alpha\sum_{j=1}^mp_{(i),j}L_{(i),j}\right)^2-J}+2\alpha^2\norm{2\sum_{j=1}^Mp_{(i),j}(1-p_{(i),j})L_{(i),j}}\notag\\
        =&\max\{(1-\alpha\lambda_{(i),2})^2,(1-\alpha\lambda_{i1),m})^2\}+2\alpha^2\zeta_{(i)},
    \end{align}
    where $\lambda_{(i),l}$ denote the $l$-th smallest eigenvalue of matrix $\sum_{j=1}^mp_{(i),j}L_{(i),j} $ and $\zeta_{(i)}>0$ denote the spectral norm of matrix $\sum_{j=1}^mp_{(i),j}(1-p_{(i),j})L_{(i),j}$.
    
In all, generalized all $k$ mod $n=i (i=1,...,n)$ we could conclude
\begin{align}       \norm{\mathbb{E}\left[W^{(k)\intercal} W^{(k)}\right]-J}
        \leq&\max\{(1-\alpha\lambda_{(1),2})^2,(1-\alpha\lambda_{(1),m})^2,...,(1-\alpha\lambda_{(n),2})^2,(1-\alpha\lambda_{(n),m})^2\}\notag\\
        &+2\alpha^2\max\{\zeta_{(1)},\zeta_{(2)},...,\zeta_{(n)}\}.
\end{align}

Assume $\lambda$ represents the eigenvalue such that $|1-\alpha\lambda|=\max\{|1-\alpha\lambda_{(1),2}|,|1-\alpha\lambda_{(1),m}|,|1-\alpha\lambda_{(2),2}|,|1-\alpha\lambda_{(2),m}|,...,|1-\alpha\lambda_{(n),2}|,|1-\alpha\lambda_{(n),m}|\}$, and $\zeta=\max\{\zeta_{(1)},\zeta_{(2)},...,\zeta_{(n)}\}$, we could have

\begin{align}       \norm{\mathbb{E}\left[W^{(k)\intercal} W^{(k)}\right]-J}\leq|1-\alpha\lambda|^2+2\alpha^2\zeta.
\end{align}

\textbf{(2) Bound} $\norm{\mathbb{E}\left[W^{(k)} A^{(k)\intercal}-JA^{(k)\intercal}\right]}$.

Because $A^{(k)}$ is a left stochastic matrix and $W^{(k)}$ is doubly stochastic matrix,  from the property of spectrum nor $\norm{\cdot}\leq\norm{\cdot}_1\norm{\cdot}_\infty$, we could know that for all $k$
$$\norm{A^{k}}\leq\sqrt{m},\quad \norm{W^{(k)}}\leq1.$$ Moreover, we could easy check $\norm{J}=1$. Therefore,
\begin{align}
    &\norm{\mathbb{E}\left[W^{(k)} A^{(k)\intercal}-JA^{(k)\intercal}\right]}\leq\norm{\mathbb{E}\left[(W^{(k)}-J)A^{(k)\intercal}\right]}
    \leq \mathbb{E}\left[\norm{(W^{(k)}-J)A^{(k)\intercal}}\right]\notag\\
    \leq& \mathbb{E}\left[\norm{(W^{(k)}-J)}\norm{A^{(k)\intercal}}\right]
    \leq \mathbb{E}\left[\left(\norm{W^{(k)}}+\norm{J}\right)\norm{A^{(k)\intercal}}\right]\leq2\sqrt{m}
\end{align}
\textbf{(3) Bound} $\norm{\mathbb{E}\left[A^{(k)} (I-J)A^{(k)\intercal}\right]}$.

\begin{align*}
    \norm{\mathbb{E}\left[A^{(k)} (I-J)A^{(k)\intercal}\right]}\leq \mathbb{E}\left[\norm{A^{(k)} (I-J)A^{(k)\intercal}}\right]\leq \mathbb{E}\left[\norm{A^{(k)}}^2\norm{(I-J)}\right]\leq m
\end{align*}

Combine \textbf{(1)-(3)} and (\ref{eq:W_3}), we have
\begin{align}\label{eq:W_4}
   \norm{\mathbb{E}\left[\widetilde{W}^{(k)}(I-J)\widetilde{W}^{(k)\intercal}\right]}
   \leq&(1-\omega)^2(1-\alpha\lambda)^2
   +4\omega(1-\omega)\sqrt{m}+2\omega^2m
\end{align}

Similarly, we are going to bound each term in iequality (\ref{eq:W_32}) one by one as well.

\textbf{(4) Bound }$\norm{\mathbb{E}\left[W^{(k)\intercal} W^{(k)}\right]}$. 

Similar to proof in \textbf{(1)}, if $k$ mod $n=i$ ($i=1,...,n$), we have
    \begin{align}
    \norm{\mathbb{E}\left[W^{(k)\intercal} W^{(k)}\right]}\leq&\norm{\left(I-\alpha\sum_{j=1}^mp_{(i),j}L_{(i),j}\right)^2}+2\alpha^2\norm{2\sum_{j=1}^Mp_{(i),j}(1-p_{(i),j})L_{(i),j}}\notag\\
        =&\max\{(1-\alpha\lambda_{(i),2})^2,(1-\alpha\lambda_{(i),m})^2\}+2\alpha^2\zeta_{(i)},
    \end{align}
    where $\lambda_{(i),l}$ denote the $l$-th smallest eigenvalue of matrix $\sum_{j=1}^mp_{(i),j}L_{(i),j} $ and $\zeta_{(1)}>0$ denote the spectral norm of matrix $\sum_{j=1}^mp_{(i),j}(1-p_{(i),j})L_{(i),j}$.

    

In all, generalize all $k$ mod $n=i (i=1,...,n)$ we could conclude
\begin{align}       \norm{\mathbb{E}\left[W^{(k)\intercal} W^{(k)}\right]}
        \leq&\max\{(1-\alpha\lambda_{(1),2})^2,(1-\alpha\lambda_{(1),m})^2,...,(1-\alpha\lambda_{(n),2})^2,(1-\alpha\lambda_{(n),m})^2\}\notag\\
        &+2\alpha^2\max\{\zeta_{(1)},\zeta_{(2)},...,\zeta_{(n)}\}.
\end{align}

Assume $\lambda$ represents the eigenvalue such that $|1-\alpha\lambda|=\max\{|1-\alpha\lambda_{(1),2}|,|1-\alpha\lambda_{(1),m}|,|1-\alpha\lambda_{(2),2}|,|1-\alpha\lambda_{(2),m}|,...,|1-\alpha\lambda_{(n),2}|,|1-\alpha\lambda_{(n),m}|\}$, and $\zeta=\max\{\zeta_{(1)},\zeta_{(2)},...,\zeta_{(n)}\}$, we could have

\begin{align}       \norm{\mathbb{E}\left[W^{(k)\intercal} W^{(k)}\right]}\leq|1-\alpha\lambda|^2+2\alpha^2\zeta.
\end{align}

\textbf{(2) Bound} $\norm{\mathbb{E}\left[W^{(k)} A^{(k)\intercal}\right]}$.

Because $A^{(k)}$ is a left stochastic matrix and $W^{(k)}$ is doubly stochastic matrix,  from the property of spectrum nor $\norm{\cdot}\leq\norm{\cdot}_1\norm{\cdot}_\infty$, we could know that for all $k$
$$\norm{A^{k}}\leq\sqrt{m},\quad \norm{W^{(k)}}\leq1.$$ Moreover, we could easy check $\norm{J}=1$. Therefore,
\begin{align}
    &\norm{\mathbb{E}\left[W^{(k)} A^{(k)\intercal}\right]}\leq \mathbb{E}\left[\norm{W^{(k)}}\norm{A^{(k)\intercal}}\right]\leq\sqrt{m}
\end{align}
\textbf{(3) Bound} $\norm{\mathbb{E}\left[A^{(k)}A^{(k)\intercal}\right]}$.

\begin{align*}
    \norm{\mathbb{E}\left[A^{(k)}A^{(k)\intercal}\right]}\leq \mathbb{E}\left[\norm{A^{(k)}}^2\right]\leq m
\end{align*}

Combine \textbf{(4)-(5)} and (\ref{eq:W_32}), we have
\begin{align}\label{eq:W_42}
   \norm{\mathbb{E}\left[\widetilde{W}^{(k)}\widetilde{W}^{(k)\intercal}\right]}
   \leq&(1-\omega)^2(1-\alpha\lambda)^2
   +2\omega(1-\omega)\sqrt{m}+\omega^2m
\end{align}

From the proof in Appendix B of \cite{wang2019matcha}, we know that $\lambda>0$. We assume $0<\alpha<\frac{1}{\lambda}$, and $\omega\in(0,1)$, combine (\ref{eq:W_4}) and (\ref{eq:W_42}) we have
\begin{align}\label{eq:W5}
   \rho=&\max\{\norm{\mathbb{E}\left[\widetilde{W}^{(k)}(I-J)\widetilde{W}^{(k)\intercal}\right]}, \norm{\mathbb{E}\left[\widetilde{W}^{(k)}\widetilde{W}^{(k)\intercal}\right]}\}\notag\\
   \leq&(1-\omega)^2(1-\alpha\lambda)^2
   +4\omega(1-\omega)\sqrt{m}+2\omega^2m+2\alpha^2\zeta\notag\\
   \leq&(1-\omega)^2(1-\alpha\lambda)^2
   +4\omega(1-\omega)\sqrt{m}+4\omega^2m+2\alpha^2\zeta\notag\\
   =&(1-\omega)^2(1-\alpha\lambda)^2+4\omega(1-\omega)\sqrt{m}\left[(1-\alpha\lambda)+\alpha\lambda\right]+\omega^2m+2\alpha^2\zeta\notag\\
   \leq&(1-\omega)^2(1-\alpha\lambda)^2+4\omega(1-\omega)\sqrt{m}(1-\alpha\lambda)+4\omega^2m+4\alpha\lambda\omega(1-\omega)\sqrt{m}+2\alpha^2\zeta\notag\\
   \leq&\left[(1-\omega)(1-\alpha\lambda)+2\sqrt{m}\omega\right]^2+4\alpha\lambda\omega(1-\omega)\sqrt{m}+2\alpha^2\zeta\notag\\
   \leq&\left[(1-\omega)(1-\alpha\lambda)+2\sqrt{m}\omega\right]^2+4\omega\sqrt{m}+2\alpha^2\zeta
\end{align}
Define $f_{\lambda,\alpha}(\omega)=\left[(1-\omega)(1-\alpha\lambda)+2\sqrt{m}\omega\right]^2+4\omega\sqrt{m}+2\alpha^2\zeta$, we have
$$f'_{\lambda,\alpha}(\omega)=2\left[(1-\omega)(1-\alpha\lambda)+2\sqrt{m}\omega\right]\left[2\sqrt{m}\omega-(1-\alpha\lambda)\right]+4\sqrt{m}.$$
$m$ is the number of the worker and it must satisfy $m>1$. Together with $\alpha\lambda\in(0,1)$, we could conclude $f'_{\lambda,\alpha}(\omega)>0$ for all $\omega\in(0,1)$. Then take 
$$\omega_0=\frac{1-\alpha\lambda}{2k\sqrt{m}},$$
where $k>1$. We know that for all $\omega\in(0,\omega_0)$,
\begin{align}
    f_{\lambda,\alpha}(\omega)\leq f_{\lambda,\alpha}(\omega_0)
    =&\left[\left(1-\frac{1-\alpha\lambda}{2k\sqrt{m}}\right)(1-\alpha\lambda)+\frac{1-\alpha\lambda}{k}\right]^2+\frac{2(1-\alpha\lambda)}{k}+2\alpha^2\zeta\notag\\
    \leq& \frac{(k+1)^2}{k^2}(1-\alpha\lambda)^2+\frac{2(1-\alpha\lambda)}{k}+2\alpha^2\zeta
\end{align}
Define $h_\lambda(\alpha)=\frac{(k+1)^2}{k^2}(1-\alpha\lambda)^2+\frac{2(1-\alpha\lambda)}{k}+2\alpha^2\zeta$, then we have
\begin{align*}
    h'_{\lambda}(\alpha)&=-\frac{2(k+1)^2}{k^2}\lambda(1-\alpha\lambda)-\frac{2\lambda}{k}+4\alpha\zeta,\\
    h''_{\lambda}(\alpha)&=\frac{2(k+1)^2}{k^2}\lambda^2+4\zeta.
\end{align*}
Since $h''_\lambda(\alpha)>0$, $h_\lambda(\alpha)$ is convex quadratic fucntion. Let $h'_\lambda(\alpha)=0$, we could get the minimun point is:
$$\alpha^*=\frac{[(k+1)^2+k]\lambda}{(k+1)^2\lambda^2+2k^2\zeta}.$$
We take $\widetilde{\alpha}=\frac{(k+1)^2\lambda}{(k+1)^2\lambda^2+2k^2\zeta},$ it is easy to know $0<\widetilde{\alpha}<\alpha^*$.
\begin{align*}
    h_\lambda(\widetilde{\alpha})=\frac{(k+1)^2}{k^2}(1-\widetilde{\alpha}\lambda)^2+\frac{2}{k}(1-\widetilde{\alpha}\lambda)+2\alpha^2\zeta=\frac{4k\zeta+2(k+1)^2\zeta}{(k+1)^2\lambda^2+2k^2\zeta}.
\end{align*}

It is obvious that $\widetilde{\alpha}\lambda\in(0,1)$. Then, we are going to compute the bound for $k$ to ensure $h_\lambda(\widetilde{\alpha})<1$.When $k>\max\{1,\frac{8\zeta}{\lambda^2}-1\}$, we have:
\begin{align*}
    &\frac{k+1}{8}>\frac{\zeta}{\lambda^2}\Rightarrow \frac{(k+1)^2}{8K+8}>\frac{\zeta}{\lambda^2}\Rightarrow \frac{(k+1)^2}{8K+4}>\frac{\zeta}{\lambda^2}\\
    \Rightarrow&2(k+1)^2\zeta+4k\zeta<(k+1)^2\lambda^2+2k^2\zeta\notag\\
    \Rightarrow&\frac{4k\zeta+2(k+1)^2\zeta}{(k+1)^2\lambda^2+2k^2\zeta}<1
\end{align*}

For any $k>\max\{1,\frac{8\zeta}{\lambda^2}-1\}$, by the convex property of $h_{\lambda}(\alpha)$, we know when $\alpha\in\left(\alpha_{min}, \alpha_{max}\right)$, where:
\begin{align*}
    \alpha_{min} = \frac{(k+1)^2\lambda}{(k+1)^2\lambda^2+2k^2\zeta},\quad
    \alpha_{max} = \min\{\frac{1}{\lambda}, \frac{[(k+1)^2+k]\lambda}{(k+1)^2\lambda^2+2k^2\zeta}\}.
\end{align*}
There exists a range of averaging parameter
$\omega\in(0,\frac{1-\alpha\lambda}{2k\sqrt{m}})$, such that
$$\rho=\max\{\norm{\mathbb{E}\left[\widetilde{W}^{(k)}(I-J)\widetilde{W}^{(k)\intercal}\right]}, \norm{\mathbb{E}\left[\widetilde{W}^{(k)}\widetilde{W}^{(k)\intercal}\right]}\}\leq h_\lambda(\alpha)<1$$.

Furthermor, $k>\frac{\lambda^2}{2\zeta}\Rightarrow\frac{1}{\lambda}>\frac{[(k+1)^2+k]\lambda}{(k+1)^2\lambda^2+2k^2\zeta}$. Therefore, for any $k>\max\{1,\frac{8\zeta}{\lambda^2}-1,\frac{\lambda^2}{2\zeta}\}$, when $\alpha\in\left(\alpha_{min}, \alpha_{max}\right)$, where:
\begin{align*}
    \alpha_{min} = \frac{(k+1)^2\lambda}{(k+1)^2\lambda^2+2k^2\zeta},\quad
    \alpha_{max} = \frac{[(k+1)^2+k]\lambda}{(k+1)^2\lambda^2+2k^2\zeta}.
\end{align*}

Going back to the assumption for $\lambda$, when $1-\alpha\lambda\in(0,1)$ always holds for sufficient small $\alpha$, $\lambda$ represents the eigenvalue such that $|1-\alpha\lambda|=\max\{|1-\alpha\lambda_{(1),2}|,|1-\alpha\lambda_{(1),m}|,|1-\alpha\lambda_{(2),2}|,|1-\alpha\lambda_{(2),m}|,...,|1-\alpha\lambda_{(n),2}|,|1-\alpha\lambda_{(n),m}|\}$ should be exactly $\lambda=\min\{\lambda_{(1),2},\lambda_{(2),2},...,\lambda_{(n),2}\}$.

We generalized the above analysis of the construction for $\alpha$ and $\omega$ as the following. Assume $\lambda_{min}=\min\{\lambda_{(i),2}:i=1,...,n\}$, $\lambda_{max}=\min\{\lambda_{(i),m}|i=1,...,n\}$ and $\zeta=\max\{\zeta_{(1)},\zeta_{(2)},...,\zeta_{(n)}\}$. For any $k>\max\{1,\frac{8\zeta}{\lambda_{min}^2}-1,\frac{\lambda_{max}^2}{2\zeta}\}$, there exists a range $\alpha\in(\frac{(k+1)^2\lambda_{min}}{(k+1)^2\lambda_{min}^2+2k^2\zeta}, \frac{[(k+1)^2+k]\lambda_{min}}{(k+1)^2\lambda_{min}^2+2k^2\zeta})$, such that for any $\alpha$ in this range, we could find a range $\omega\in(0,\frac{1-\alpha\lambda}{2k\sqrt{m}})$ such that the spectral norm 
$$\rho=\max\{\norm{\mathbb{E}\left[\widetilde{W}^{(k)}(I-J)\widetilde{W}^{(k)\intercal}\right]}, \norm{\mathbb{E}\left[\widetilde{W}^{(k)}\widetilde{W}^{(k)\intercal}\right]}\}<1$$.

\section{Proof for Theorem \ref{thm:2}}

Before moving into the detailed proof, we first introduce a lemma.
\begin{lemma}\label{lma:1}
    Let $\{\widetilde{W}^{(k)}\}_{k=1}^{\infty}$ be i.i.d matrix generated from AL-DSGD algorithm $$\rho=\max\{\norm{\mathbb{E}\left[\widetilde{W}^{(k)}(I-J)\widetilde{W}^{(k)\intercal}\right]}, \norm{\mathbb{E}\left[\widetilde{W}^{(k)}\widetilde{W}^{(k)\intercal}\right]}\}<1,$$
    Then we could claim
    \begin{align*}
        \mathbb{E}\left[\norm{B\left(\prod_{k=1}^n\widetilde{W}^{(k)}\right)(I-J)}_F^2\right]\leq\rho^n\norm{B}_F^2.
    \end{align*}
\end{lemma}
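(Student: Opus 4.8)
The plan is to prove the bound by induction on the number of factors, peeling the matrices off the product one at a time and exploiting the independence of the $\widetilde{W}^{(k)}$ together with the two spectral-norm bounds that are packaged into the single constant $\rho$. Write $Q_j := B\prod_{k=1}^j\widetilde{W}^{(k)}$, so that $Q_0=B$ and $Q_n$ is the matrix appearing in the statement, and let $\mathcal{F}_j$ denote the $\sigma$-algebra generated by $\widetilde{W}^{(1)},\dots,\widetilde{W}^{(j)}$. First I would record two elementary facts. Since $J=\mathbf{1}\mathbf{1}^\intercal/m$ is a symmetric orthogonal projection, $I-J$ is symmetric, idempotent, and positive semidefinite; hence for every $k$ both $\mathbb{E}[\widetilde{W}^{(k)}(I-J)\widetilde{W}^{(k)\intercal}]$ and $\mathbb{E}[\widetilde{W}^{(k)}\widetilde{W}^{(k)\intercal}]$ are symmetric positive semidefinite with operator norm at most $\rho$, directly by the definition of $\rho$. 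Moreover, for any symmetric positive semidefinite $E$ with $\norm{E}\le\rho$ and any matrix $C$, the cyclic property of the trace gives $\Tr[CEC^\intercal]=\Tr[EC^\intercal C]\le\norm{E}\,\Tr[C^\intercal C]\le\rho\,\norm{C}_F^2$, where the middle inequality follows by diagonalizing the positive semidefinite matrix $C^\intercal C$.

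The second step is a contraction estimate for the bare Frobenius norm, with no projection present. Conditioning on $\mathcal{F}_{j-1}$, the factor $Q_{j-1}$ is measurable while $\widetilde{W}^{(j)}$ is independent of it, so
\begin{align*}
\mathbb{E}\left[\norm{Q_j}_F^2\,\middle|\,\mathcal{F}_{j-1}\right]=\Tr\left[Q_{j-1}\,\mathbb{E}\left[\widetilde{W}^{(j)}\widetilde{W}^{(j)\intercal}\right]Q_{j-1}^\intercal\right]\le\rho\,\norm{Q_{j-1}}_F^2,
\end{align*}
where the inequality applies the elementary fact above with $E=\mathbb{E}[\widetilde{W}^{(j)}\widetilde{W}^{(j)\intercal}]$. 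Taking total expectations and iterating this from $j=n-1$ down to $j=1$ yields $\mathbb{E}[\norm{Q_{n-1}}_F^2]\le\rho^{\,n-1}\norm{B}_F^2$.

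Finally I would treat the outermost factor, the only place where the projection is retained. Using $(I-J)^2=I-J$ together with its symmetry and then conditioning on $\mathcal{F}_{n-1}$,
\begin{align*}
\mathbb{E}\left[\norm{Q_n(I-J)}_F^2\,\middle|\,\mathcal{F}_{n-1}\right]=\Tr\left[Q_{n-1}\,\mathbb{E}\left[\widetilde{W}^{(n)}(I-J)\widetilde{W}^{(n)\intercal}\right]Q_{n-1}^\intercal\right]\le\rho\,\norm{Q_{n-1}}_F^2,
\end{align*}
this time using the elementary fact with $E=\mathbb{E}[\widetilde{W}^{(n)}(I-J)\widetilde{W}^{(n)\intercal}]$. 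Taking total expectations and combining with the previous paragraph gives $\mathbb{E}[\norm{Q_n(I-J)}_F^2]\le\rho\cdot\rho^{\,n-1}\norm{B}_F^2=\rho^{\,n}\norm{B}_F^2$, which is the claim.

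I expect the main conceptual point to be the recognition that the induction must use \emph{both} quantities hidden in the definition of $\rho$: the projection-carrying bound $\norm{\mathbb{E}[\widetilde{W}(I-J)\widetilde{W}^\intercal]}\le\rho$ is available only for the single outermost factor, because once that factor is peeled the $(I-J)$ disappears and one is forced to control the remaining $n-1$ factors by the bare second-moment bound $\norm{\mathbb{E}[\widetilde{W}\widetilde{W}^\intercal]}\le\rho$. This is exactly why $\rho$ is defined as the maximum of the two spectral norms rather than the projection term alone. The remaining ingredients --- positive semidefiniteness of the conditional second-moment matrices, idempotency of $I-J$, and pulling the expectation through the trace via independence --- are routine.
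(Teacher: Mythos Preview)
Your proof is correct and follows essentially the same approach as the paper: peel off the outermost factor using the bound on $\mathbb{E}[\widetilde{W}^{(n)}(I-J)\widetilde{W}^{(n)\intercal}]$, then iterate the contraction for the remaining $n-1$ factors using the bound on $\mathbb{E}[\widetilde{W}^{(j)}\widetilde{W}^{(j)\intercal}]$. The only cosmetic difference is that the paper expands the Frobenius norm row-by-row as $\sum_i v_i^\intercal C v_i$ and invokes $\sigma_{\max}(C)$, whereas you phrase the same bound through $\Tr[CEC^\intercal]\le\norm{E}\norm{C}_F^2$ and conditional expectations; your explicit remark on why $\rho$ must be the maximum of both spectral norms is a helpful addition.
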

\begin{proof}
    See Appendix \ref{supp:lma1}
\end{proof}

Recall the update rule for AL-DSGD algorithm:
\begin{align}
    X_{k+1}=\widetilde{W}^{k}X_{k}-\gamma(1-\omega)Diag\left(W^{(k)}\right)\left[G^{(k)}+\lambda_N(X_{k}-X_k^N)+\lambda_\tau(X_{k}-X_k^\tau)\right],
\end{align}
where 
\begin{align*}
    X_k &= [x_{1,k},...,x_{m,k}],\\
    G^{(k)} &= [g_1(x_{1,k}),...,g_m(x_{m,k})],\\
    \nabla F^{(k)}&=[\nabla F_1(x_{1,k}),...,\nabla F_m(x_{m,k})]
\end{align*}
Let $\lambda=2\lambda_N=2\lambda_\tau$, and $C_k = \frac{1}{2}(X_k^N+X_k^\tau)$. Then we have
\begin{align}
    X_{k+1}=\widetilde{W}^{k}X_{k}-\gamma(1-\omega)Diag\left(W^{(k)}\right)\left[G^{(k)}+\lambda(X_{k}-C_k)\right]
\end{align}
By the construction of $W^{(k)}$, the diagonal term in $W^{(k)}$ are all $1-\alpha$, we have
\begin{align}
    X_{k+1}=\widetilde{W}^{k}X_{k}-\gamma(1-\omega)(1-\alpha)\left[G^{(k)}+\lambda(X_{k}-C_k)\right]
\end{align}
After taking the average and define $\eta = \gamma(1-\omega)(1-\alpha)$, we have
\begin{align}
    \overline{x}_{k+1}=\overline{x}_k-\eta\left[\frac{G^{(k)}\mathbf{1}}{m}+\lambda(\overline{x}_{k+\frac{1}{2}}-\overline{c}_k)\right]=\overline{x}_k-\eta\left[\frac{G^{(k)}\mathbf{1}}{m}+\lambda\Delta^{(k)}\right].
\end{align}

Denote $\Delta^{(k)} = \overline{x}_{k+\frac{1}{2}}-\overline{c}_k$. 
from Assumption \ref{asm:1} (5), we could conclude $\norm{\Delta^{(k)}}^2\leq\Delta^2$.

Then we have
\begin{align}
    F(\overline{x}_{k+1})-F(\overline{x}_{k})\leq& \langle\nabla F(\overline{x}_k), \overline{x}_{k+1}-\overline{x}_k\rangle+\frac{L}{2}\norm{\overline{x}_{k+1}-\overline{x}_k}\notag\\
    \leq&-\eta\langle\nabla F(\overline{x}_k), \frac{G^{(k)}\mathbf{1}}{m}+\lambda\Delta^{(k)}\rangle+\frac{\eta^2L}{2}\norm{\frac{G^{(k)}\mathbf{1}}{m}+\lambda\Delta^{(k)}}\notag\\
    \leq&-\eta\langle\nabla F(\overline{x}_k), \frac{G^{(k)}\mathbf{1}}{m}\rangle+\frac{\eta^2L}{2}\norm{\frac{G^{(k)}\mathbf{1}}{m}}\notag\\
    &-\lambda\eta\langle\nabla F(\overline{x}_k),\Delta^{(k)}\rangle+\frac{\lambda^2\eta^2L}{2}\norm{\Delta^{(k)}}^2+\lambda\eta^2L\langle\frac{G^{(k)}\mathbf{1}}{m},\Delta^{(k)}\rangle\notag\\
    \leq&-\eta\langle\nabla F(\overline{x}_k), \frac{G^{(k)}\mathbf{1}}{m}\rangle+\frac{\eta^2L}{2}\norm{\frac{G^{(k)}\mathbf{1}}{m}}+\lambda\eta\beta\Delta+\lambda\eta^2L\beta \Delta+\frac{\lambda^2\eta^2L\Delta^2}{2}
\end{align}

Inspired by the proof in Appendix C.3 of \cite{wang2019matcha}, we have:
\begin{align}
    \mathbb{E}\left[F(\overline{x}_{k+1})-F(\overline{x}_{k})\right]\leq&-\frac{\eta}{2}\mathbb{E}\left[\norm{\nabla F(\overline{x}_k)}^2\right]-\frac{\eta}{2}(1-\eta L)\mathbb{E}\left[\norm{\frac{\nabla F^{(k)}\mathbf{1}}{m}}^2\right]\\
    &+\frac{\eta L^2}{2m}\mathbb{E}\left[\norm{X_k(I-J)}_F^2\right]+\frac{\eta^2L\sigma^2}{2m}+\lambda\eta\beta\Delta+\lambda\eta^2L\beta \Delta+\frac{\lambda^2\eta^2L\Delta^2}{2}
\end{align}

Denote $M=\frac{\eta^2L\sigma^2}{2m}+\lambda\eta\beta\Delta+\lambda\eta^2L\beta \Delta+\frac{\lambda^2\eta^2L\Delta^2}{2}$, the bound could be simplified as
\begin{align}
    \mathbb{E}\left[F(\overline{x}_{k+1})-F(\overline{x}_{k})\right]\leq-\frac{\eta}{2}\mathbb{E}\left[\norm{\nabla F(\overline{x}_k)}^2\right]-\frac{\eta}{2}(1-\eta L)\mathbb{E}\left[\norm{\frac{\nabla F^{(k)}\mathbf{1}}{m}}^2\right]+\frac{\eta L^2}{2m}\mathbb{E}\left[\norm{X_k(I-J)}_F^2\right]+M.
\end{align}
Summing over all iterations and then take the average, we have
\begin{align}
    \frac{\mathbb{E}\left[F(\overline{x}_{K})-F(\overline{x}_{1})\right]}{K}
    \leq& -\frac{\eta}{2}\frac{1}{K}\sum_{k=1}^k\mathbb{E}\left[\norm{\nabla F(\overline{x}_k)}^2\right]-\frac{\eta}{2}(1-\eta L)\frac{1}{K}\sum_{k=1}^k\mathbb{E}\left[\norm{\frac{\nabla F^{(k)}\mathbf{1}}{m}}^2\right]\notag\\
    &+\frac{\eta L^2}{2mK}\sum_{k=1}^k\mathbb{E}\left[\norm{X_k(I-J)}_F^2\right]+M.
\end{align}
By rearranging the inequality, we have
\begin{align}\label{ieq:sumF1}
    \frac{1}{K}\sum_{k=1}^k\mathbb{E}\left[\norm{\nabla F(\overline{x}_k)}^2\right]
    \leq&\frac{2(F(\overline{x}_{1})-F^*)}{\eta K}-(1-\eta L)\frac{1}{K}\sum_{k=1}^k\mathbb{E}\left[\norm{\frac{\nabla F^{(k)}\mathbf{1}}{m}}^2\right]\notag\\
    &+\frac{L^2}{mK}\sum_{k=1}^k\mathbb{E}\left[\norm{X_k(I-J)}_F^2\right]+\frac{2M}{\eta}\notag\\
    \leq&\frac{2(F(\overline{x}_{1})-F^*)}{\eta K}+\frac{L^2}{mK}\sum_{k=1}^k\mathbb{E}\left[\norm{X_k(I-J)}_F^2\right]+\frac{2M}{\eta}.
\end{align}
Then we are goint to bound $\mathbb{E}\left[\norm{X_k(I-J)}_F^2\right]$. By the property of matrix $J$, we have
\begin{align}
    X_k(I-J)=&(X_{k-1}-\eta(G^{(k-1)}+\lambda\Delta_{k-1})\widetilde{W}^{(k-1)}(I-J)\notag\\
    =&X_{k-1}\widetilde{W}^{(k-1)}(I-J)-\eta(G^{(k-1)}+\lambda\Delta^{(k-1)})\widetilde{W}^{(k-1)}(I-J)\notag\\
    =&...\notag\\
    =&X_{1}\prod_{q=1}^{k-1}\widetilde{W}^{(q)}(I-J)-\eta \sum_{q=1}^{k-1}(G^{(k-1)}+\lambda\Delta^{(k)})\left(\prod_{l=q}^{k-1}\widetilde{W}^{(l)}\right)(I-J)
\end{align}
Without loss of generalizty, assume $X_{1}=0$. Therefore, by Assumption (\ref{asm:1}) and Lemma \ref{lma:1} we have
\begin{align}
    &\norm{X_k(I-J)}_F^2\notag\\
    =&\eta^2\norm{\sum_{q=1}^{k-1}(G^{(k-1)}+\lambda\Delta^{(k)})\left(\prod_{l=q}^{k-1}\widetilde{W}^{(l)}\right)(I-J)}_F^2\notag\\
    \leq&2\eta^2\norm{\sum_{q=1}^{k-1}G^{(k-1)}\left(\prod_{l=q}^{k-1}\widetilde{W}^{(l)}\right)(I-J)}_F^2+2\eta^2\lambda^2\norm{\sum_{q=1}^{k-1}\Delta^{(k)}\left(\prod_{l=q}^{k-1}\widetilde{W}^{(l)}\right)(I-J)}_F^2.
\end{align}
From Assumption \ref{asm:1} (5), we could conclude $\norm{\Delta^{(k)}}^2\leq\Delta^2$ for all $k$. Combine with Lemma \ref{lma:1}, we have
\begin{align}
    &\norm{X_k(I-J)}_F^2\notag\\
    \leq&2\eta^2\norm{\sum_{q=1}^{k-1}G^{(k-1)}\left(\prod_{l=q}^{k-1}\widetilde{W}^{(l)}\right)(I-J)}_F^2+2\eta^2\lambda^2\Delta^2\sum_{q=1}^k\rho^q\notag\\
    \leq&2\eta^2\norm{\sum_{q=1}^{k-1}G^{(k-1)}\left(\prod_{l=q}^{k-1}\widetilde{W}^{(l)}\right)(I-J)}_F^2+\frac{2\eta^2\lambda^2\Delta^2\rho}{1-\rho}\notag\\
    \leq&2\eta^2\norm{\sum_{q=1}^{k-1}\left(G^{(k-1)}-\nabla F^{(q)}\right)\left(\prod_{l=q}^{k-1}\widetilde{W}^{(l)}\right)(I-J)}_F^2+2\eta^2\norm{\sum_{q=1}^{k-1}\nabla F^{(q)}\left(\prod_{l=q}^{k-1}\widetilde{W}^{(l)}\right)(I-J)}_F^2+\frac{2\eta^2\lambda^2\Delta^2\rho}{1-\rho}.
\end{align}
Taking expectation, we have:
\begin{align}
    \mathbb{E}\left[\norm{X_k(I-J)}_F^2\right]\leq 2\eta^2\mathbb{E}\left[\norm{\sum_{q=1}^{k-1}\nabla F^{(q)}\left(\prod_{l=q}^{k-1}\widetilde{W}^{(l)}\right)(I-J)}_F^2\right] +\frac{2m\eta^2\sigma^2\rho}{1-\rho}+\frac{2\eta^2\lambda^2\Delta^2\rho}{1-\rho}
\end{align}
For notation simplicity, let $B_{q,p}=\left(\prod_{l=q}^p\widetilde{W}^{(l)}\right)(I-J)$, then we have
\begin{align}
    &\mathbb{E}\left[\norm{\sum_{q=1}^{k-1}\nabla F^{(q)}\left(\prod_{l=q}^{k-1}\widetilde{W}^{(l)}\right)(I-J)}_F^2\right]\notag\\
    =&\sum_{q=1}^{k-1}\mathbb{E}\left[\norm{\nabla F^{(q)}B_{q,k-1}}_F^2\right]+\sum_{q=1}^{k-1}\sum_{p=1,p\neq q}^{k-1}\mathbb{E}\left[Tr\{B_{q,k-1}^\intercal\nabla F^{(q)\intercal}\nabla F^{(p)}B_{p,k-1}\}\right]\notag\\
    \leq& \sum_{q=1}^{k-1}\mathbb{E}\left[\norm{\nabla F^{(q)}B_{q,k-1}}_F^2\right]+\sum_{q=1}^{k-1}\sum_{p=1,p\neq q}^{k-1}\mathbb{E}\left[\norm{\nabla F^{(q)}B_{q,k-1}}_F^2\norm{\nabla F^{(p)}B_{p,k-1}}_F^2\right]\notag\\
    \leq&\sum_{q=1}^{k-1}\mathbb{E}\left[\norm{\nabla F^{(q)}B_{q,k-1}}_F^2\right]+\sum_{q=1}^{k-1}\sum_{p=1,p\neq q}^{k-1}\mathbb{E}\left[\frac{1}{2\epsilon}\norm{\nabla F^{(q)}B_{q,k-1}}_F^2+\frac{\epsilon}{2}\norm{\nabla F^{(p)}B_{p,k-1}}_F^2\right]\notag\\
    \leq&\sum_{q=1}^{k-1}\mathbb{E}\left[\norm{\nabla F^{(q)}B_{q,k-1}}_F^2\right]+\sum_{q=1}^{k-1}\sum_{p=1,p\neq q}^{k-1}\mathbb{E}\left[\frac{\rho^{k-q}}{2\epsilon}\norm{\nabla F^{(q)}}_F^2+\frac{\rho^{k-p}\epsilon}{2}\norm{\nabla F^{(p)}}_F^2\right].
\end{align}
Taking $\epsilon=\rho^{\frac{p-q}{2}}$, we have
\begin{align*}
    &\mathbb{E}\left[\norm{\sum_{q=1}^{k-1}\nabla F^{(q)}\left(\prod_{l=q}^{k-1}\widetilde{W}^{(l)}\right)(I-J)}_F^2\right]\notag\\
    \leq&\sum_{q=1}^{k-1}\mathbb{E}\left[\norm{\nabla F^{(q)}B_{q,k-1}}_F^2\right]+\frac{1}{2}\sum_{q=1}^{k-1}\sum_{p=1,p\neq q}^{k-1}\sqrt{\rho}^{2k-p-q}\mathbb{E}\left[\norm{\nabla F^{(q)}}_F^2+\norm{\nabla F^{(p)}}_F^2\right]\notag\\
    =&\sum_{q=1}^{k-1}\mathbb{E}\left[\norm{\nabla F^{(q)}B_{q,k-1}}_F^2\right]+\sum_{q=1}^{k-1}\sqrt{\rho}^{k-q}\mathbb{E}\left[\norm{\nabla F^{(q)}}_F^2\right]\sum_{p=1,p\neq q}^{k-1}\sqrt{\rho}^{k-p}\notag\\
    =&\sum_{q=1}^{k-1}\mathbb{E}\left[\norm{\nabla F^{(q)}B_{q,k-1}}_F^2\right]+\sum_{q=1}^{k-1}\sqrt{\rho}^{k-q}\mathbb{E}\left[\norm{\nabla F^{(q)}}_F^2\right]\left(\sum_{p=1}^{k-1}\sqrt{\rho}^{k-p}-\sqrt{\rho}^{k-q}\right)\notag\\
    \leq&\frac{\sqrt{\rho}}{1-\sqrt{\rho}}\sum_{q=1}^{k-1}\sqrt{\rho}^{k-q}\mathbb{E}\left[\norm{\nabla F^{(q)}}_F^2\right].
\end{align*}
Therefore,
\begin{align}
    \mathbb{E}\left[\norm{X_k(I-J)}_F^2\right]\leq\frac{2\eta^2\sqrt{\rho}}{1-\sqrt{\rho}}\sum_{q=1}^{k-1}\sqrt{\rho}^{k-q}\mathbb{E}\left[\norm{\nabla F^{(q)}}_F^2\right] +\frac{2\eta^2\rho(m\sigma^2+\lambda^2\Delta^2)}{1-\rho}.
\end{align}
Therefore
\begin{align}
    \frac{1}{mK}\sum_{k=1}^K\mathbb{E}\left[\norm{X_k(I-J)}_F^2\right]\leq&\frac{2\eta^2\sqrt{\rho}}{mK(1-\sqrt{\rho})}\sum_{k=1}^K\sum_{q=1}^{k-1}\sqrt{\rho}^{k-q}\mathbb{E}\left[\norm{\nabla F^{(q)}}_F^2\right] +\frac{2\eta^2\rho(m\sigma^2+\lambda^2\Delta^2)}{m(1-\rho)}\notag\\
    \leq&\frac{2\eta^2\sqrt{\rho}}{mK(1-\sqrt{\rho})}\sum_{k=1}^K\frac{\sqrt{\rho}}{1-\sqrt{\rho}}\mathbb{E}\left[\norm{\nabla F^{(q)}}_F^2\right] +\frac{2\eta^2\rho(m\sigma^2+\lambda^2\Delta^2)}{m(1-\rho)}\notag\\
    =&\frac{2\eta^2\rho}{mK(1-\sqrt{\rho})^2}\sum_{k=1}^K\mathbb{E}\left[\norm{\nabla F^{(q)}}_F^2\right] +\frac{2\eta^2\rho(m\sigma^2+\lambda^2\Delta^2)}{m(1-\rho)}
\end{align}
Note that
\begin{align}
    \norm{\nabla F^{(q)}}_F^2=&\sum_{i=1}^m\norm{\nabla F_i(x_{i,k})}^2\notag\\
    =&\sum_{i=1}^m\norm{\nabla F_i(x_{i,k})-\nabla F(x_{i,k})+\nabla F(x_{i,k})-\nabla F(\overline{x}_k)+\nabla F(\overline{x}_k)}^2\notag\\
    \leq&3\sum_{i=1}^m\left[\norm{\nabla F_i(x_{i,k})-\nabla F(x_{i,k})}^2+\norm{\nabla F(x_{i,k})-\nabla F(\overline{x}_k)}^2+\norm{\nabla F(\overline{x}_k)}^2\right]\notag\\
    \leq&3m\zeta^2+3L^2\norm{X_k(I-J)}_F^2+3m\norm{\nabla F(\overline{x}_k)}^2.
\end{align}
Therefore, we have
\begin{align}
    \frac{1}{mK}\sum_{k=1}^K\mathbb{E}\left[\norm{X_k(I-J)}_F^2\right]\leq&\frac{2\eta^2\rho(m\sigma^2+\lambda^2\Delta^2)}{m(1-\rho)}+\frac{6\eta^2\zeta^2\rho}{(1-\sqrt{\rho})^2}+\frac{6\eta^2\zeta^2\rho}{(1-\sqrt{\rho})^2}\frac{1}{mK}\mathbb{E}\left[\norm{X_k(I-J)}_F^2\right]\notag\\
    &+\frac{6\eta^2\rho}{(1-\sqrt{\rho})^2}\frac{1}{K}\sum_{i=1}^K\mathbb{E}\left[\norm{\nabla F(\overline{x}_k)}^2\right].
\end{align}
Define $D=\frac{6\eta^2L^2\rho}{(1-\sqrt{\rho})^2}$, by rearranging we have
\begin{align}\label{ieq:XIJ}
    \frac{1}{mK}\sum_{k=1}^K\mathbb{E}\left[\norm{X_k(I-J)}_F^2\right]\leq\frac{1}{1-2D}\left[\frac{2\eta^2\rho(m\sigma^2+\lambda^2\Delta^2)}{m(1-\rho)}+\frac{6\eta^2\zeta^2\rho}{(1-\sqrt{\rho})^2}+\frac{6\eta^2\rho}{(1-\sqrt{\rho})^2}\frac{1}{K}\sum_{i=1}^K\mathbb{E}\left[\norm{\nabla F(\overline{x}_k)}^2\right]\right]
\end{align}
Plugging (\ref{ieq:XIJ}) back to (\ref{ieq:sumF1}), we have
\begin{align}
\frac{1}{K} \sum_{i=1}^K\mathbb{E}\left[\norm{\nabla F(\overline{x}_k)}^2\right]\leq &\frac{2(F(\overline{x}_{1})-F^*)}{\eta K}+\frac{2M}{\eta}+\frac{1}{1-D}\frac{2\eta^2L^2\rho(m\sigma^2+\lambda^2\Delta^2)}{m(1-\rho)}+\frac{D\zeta^2}{1-D}\notag\\
&+\frac{D}{1-D}\frac{1}{K} \sum_{i=1}^K\mathbb{E}\left[\norm{\nabla F(\overline{x}_k)}^2\right].
\end{align}
Therefore,
\begin{align}
    \sum_{i=1}^K\mathbb{E}\left[\norm{\nabla F(\overline{x}_k)}^2\right]\leq &\left(\frac{2(F(\overline{x}_{1})-F^*)}{\eta K}+\frac{2M}{\eta}\right)\frac{1-D}{1-2D}+\left(\frac{2\eta^2L^2\rho(m\sigma^2+\lambda^2\Delta^2)}{m(1-\rho)}+\frac{6\eta^2L^2\zeta^2\rho}{(1-\sqrt{\rho})^2}\right)\frac{1}{1-2D}\notag\\
    \leq&\left(\frac{2(F(\overline{x}_{1})-F^*)}{\eta K}+\frac{2M}{\eta}\right)\frac{1}{1-2D}+\frac{2\eta^2L^2\rho}{1-\sqrt{\rho}}\left(\frac{m\sigma^2+\lambda^2\Delta^2}{m(1+\sqrt{\rho})}+\frac{3\zeta^2}{1-\sqrt{\rho}}\right)\frac{1}{1-2D}
\end{align}
Recall that $\eta L\leq(1-\sqrt{\rho})/4\sqrt{\rho}$, we could know that $\frac{1}{1-2D}\leq 4$. Therefore the bound could be simplified as
\begin{align}
 \sum_{i=1}^K\mathbb{E}\left[\norm{\nabla F(\overline{x}_k)}^2\right]\leq\frac{8(F(\overline{x}_{1})-F^*)}{\eta K}+\frac{8M}{\eta}+\frac{8\eta^2L^2\rho}{1-\sqrt{\rho}}\left(\frac{m\sigma^2+\lambda^2\Delta^2}{m(1+\sqrt{\rho})}+\frac{3\zeta^2}{1-\sqrt{\rho}}\right),
\end{align}
where $M=\frac{\eta^2L\sigma^2}{2m}+\lambda\eta\beta\Delta+\lambda\eta^2L\beta \Delta+\frac{\lambda^2\eta^2L\Delta^2}{2}$.
When $\eta=\lambda=\sqrt{\frac{m}{K}}$,
\begin{align}
    \sum_{i=1}^K\mathbb{E}\left[\norm{\nabla F(\overline{x}_k)}^2\right]\leq& \frac{8(F(\overline{x}_{1})-F^*)}{\sqrt{mK}}+\frac{4 L\sigma^2}{\sqrt{mk}}+8\beta\Delta\sqrt{\frac{m}{K}}+8\lambda L\beta \Delta\sqrt{\frac{m}{K}}+4\lambda^2 L\Delta^2\sqrt{\frac{m}{K}}\notag\\
    &+\frac{8\sqrt{m} L^2\rho}{(1-\sqrt{\rho})\sqrt{K}}\left(\frac{\sigma^2}{1+\sqrt{\rho}}+\frac{\Delta^2}{K(1+\sqrt{\rho})}+\frac{3\zeta^2}{1-\sqrt{\rho}}\right)\notag\\
    =&\mathcal{O}(\frac{1}{\sqrt{mK}})+\mathcal{O}(\sqrt{\frac{m}{K}})+ +\mathcal{O}(\sqrt{\frac{m}{K^3}})
\end{align}

\subsection{Proof for Lemma \ref{lma:1}}\label{supp:lma1}
\begin{proof}[Proof for Lemma \ref{lma:1}.]
Define $A_{q,n}=\prod_{k=q}^n\widetilde{W}^{(k)}$, $b_i^\intercal$ denote the $i$-th row vector of $B$. Thus, we have
$$A_{1,n}=A_{1,n-1}\widetilde{W}^{(n)}.$$
Thus, we have
\begin{align*}
    \mathbb{E}_{\widetilde{W}^{(n)}}\left[\norm{BA_{1,n}(I-J)}_F^2\right]\leq&\sum_{i=1}^d\mathbb{E}_{\widetilde{W}^{(n)}}\left[\norm{b_i^\intercal A_{1,n}(I-J)}^2\right]\notag\\
    =&\sum_{i=1}^db_i^\intercal A_{1,n-1}\mathbb{E}_{\widetilde{W}^{(n)}}\left[\widetilde{W}^{(n)}(I-J)^2\widetilde{W}^{(n)\intercal}\right]A_{1,n-1}^\intercal b_i\notag\\
    =&\sum_{i=1}^db_i^\intercal A_{1,n-1}\mathbb{E}_{\widetilde{W}^{(n)}}\left[\widetilde{W}^{(n)}(I-J)\widetilde{W}^{(n)\intercal}\right]A_{1,n-1}^\intercal b_i
\end{align*}
Let $C=\mathbb{E}_{\widetilde{W}^{(n)}}\left[\widetilde{W}^{(n)}(I-J)\widetilde{W}^{(n)\intercal}\right]$, $v_i=A_{1,n-1}^\intercal b_i$, we have:
\begin{align*}
    \mathbb{E}_{\widetilde{W}^{(n)}}\left[\norm{BA_{1,n}(I-J)}_F^2\right]\leq&\sum_{i=1}^dv_i^\intercal C v_i
    \leq \sigma_{max}(C)\sum_{i=1}^dv_i^\intercal v_i=\rho\norm{BA_{1,n-1}}_F^2
\end{align*}

Similarly, we could have
\begin{align*}
    \mathbb{E}_{\widetilde{W}^{(n-1)}}\left[\norm{BA_{1,n-1}}_F^2\right]\leq&\sum_{i=1}^d\mathbb{E}_{\widetilde{W}^{(n-1)}}\left[\norm{b_i^\intercal A_{1,n-1}}^2\right]\notag\\
    =&\sum_{i=1}^db_i^\intercal A_{1,n-2}\mathbb{E}_{\widetilde{W}^{(n-1)}}\left[\widetilde{W}^{(n-1)}\widetilde{W}^{(n-1)\intercal}\right]A_{1,n-2}^\intercal b_i\notag\\
    \leq&\rho\norm{BA_{1,n-2}}_F^2
\end{align*}

\begin{align*}
        \mathbb{E}_{\widetilde{W}^{(1)},...,\widetilde{W}^{(n)}}\left[\norm{B\left(\prod_{k=1}^n\widetilde{W}^{(k)}\right)(I-J)}_F^2\right]\leq\rho^n\norm{B}_F^2.
    \end{align*}
\end{proof}

\clearpage
\newpage

\section{Dynamic communication graph with three Laplacian matrices.}\label{Appendix: 2}   

\begin{table}[H]
\centering
\setlength{\tabcolsep}{0.4mm}{
\begin{tabular}{lllllllll}
\hline
\multicolumn{1}{c}{\textbf{}}       &\multicolumn{5}{c}{\textbf{TEST ACC}}      \\ \hline
\multicolumn{1}{c}{Node}   &\multicolumn{1}{c}{D-PSGD$^1$}    &\multicolumn{1}{c}{D-PSGD$^2$}    &\multicolumn{1}{c}{D-PSGD$^3$}    &\multicolumn{1}{c}{D-PSGD$^4$}    &\multicolumn{1}{c}{D-PSGD$^5$}        \\ \hline
\multicolumn{1}{c}{0}               &\multicolumn{1}{c}{87.95} &\multicolumn{1}{c}{91.31}      &\multicolumn{1}{c}{91.27}   &\multicolumn{1}{c}{91.28}    &\multicolumn{1}{c}{88.02}           \\ 
\multicolumn{1}{c}{1}               &\multicolumn{1}{c}{92.11} &\multicolumn{1}{c}{91.02}   &\multicolumn{1}{c}{91.12}  &\multicolumn{1}{c}{88.02}    &\multicolumn{1}{c}{88.50} \\ 
\multicolumn{1}{c}{2}               &\multicolumn{1}{c}{92.21} &\multicolumn{1}{c}{91.05}   &\multicolumn{1}{c}{91.17}  &\multicolumn{1}{c}{92.26}    &\multicolumn{1}{c}{92.22} \\ 
\multicolumn{1}{c}{3}               &\multicolumn{1}{c}{92.36} &\multicolumn{1}{c}{91.03}   &\multicolumn{1}{c}{90.02}  &\multicolumn{1}{c}{92.08}    &\multicolumn{1}{c}{92.04} \\ 
\multicolumn{1}{c}{4}               &\multicolumn{1}{c}{87.86} &\multicolumn{1}{c}{91.21}  &\multicolumn{1}{c}{91.04}   &\multicolumn{1}{c}{91.26}    &\multicolumn{1}{c}{87.52} \\ 
\multicolumn{1}{c}{5}               &\multicolumn{1}{c}{92.25} &\multicolumn{1}{c}{91.06}   &\multicolumn{1}{c}{91.12}  &\multicolumn{1}{c}{87.69}    &\multicolumn{1}{c}{89.21} \\ 
\multicolumn{1}{c}{6}               &\multicolumn{1}{c}{92.38} &\multicolumn{1}{c}{91.12}   &\multicolumn{1}{c}{91.08}  &\multicolumn{1}{c}{92.06}    &\multicolumn{1}{c}{91.16}  \\ 
\multicolumn{1}{c}{7}               &\multicolumn{1}{c}{92.32} &\multicolumn{1}{c}{91.95}   &\multicolumn{1}{c}{91.03}  &\multicolumn{1}{c}{87.42}    &\multicolumn{1}{c}{88.08}  \\ \hline
\multicolumn{1}{c}{\textbf{AVG}}               &\multicolumn{1}{c}{91.18} &\multicolumn{1}{c}{91.21}   &\multicolumn{1}{c}{90.98}    &\multicolumn{1}{c}{90.26}    &\multicolumn{1}{c}{89.59}    \\ \hline
\end{tabular}}   
\caption{\label{tab:widgets} Performance of D-PSGD with different toplogy degree on CIFAR-10. D-PSGD$^1$ presents full degree, degree = 13. D-PSGD$^2$ presents $84.6\%$ degree, degree = 11, D-PSGD$^3$ presents $69.2\%$ degree, degree = 9. D-PSGD$^4$ presents $53.8\%$ degree, degree =7, D-PSGD$^5$ presents $38.5\%$ degree, degree = 5.}
\label{Tab:3}
\end{table}

\begin{table}[H]
\centering
\setlength{\tabcolsep}{0.3mm}{
\begin{tabular}{lllllllll}
\hline
\multicolumn{1}{c}{\textbf{}}       &\multicolumn{5}{c}{\textbf{TEST ACC}}      \\ \hline
\multicolumn{1}{c}{Node}   &\multicolumn{1}{c}{ALDSGD$^1$}    &\multicolumn{1}{c}{ALDSGD$^2$}    &\multicolumn{1}{c}{ALDSGD$^3$} &\multicolumn{1}{c}{ALDSGD$^4$}    &\multicolumn{1}{c}{ALDSGD$^5$}        \\ \hline
\multicolumn{1}{c}{0}               &\multicolumn{1}{c}{93.68} &\multicolumn{1}{c}{93.73}     &\multicolumn{1}{c}{93.62}   &\multicolumn{1}{c}{93.47}    &\multicolumn{1}{c}{92.78}           \\ 
\multicolumn{1}{c}{1}               &\multicolumn{1}{c}{93.72} &\multicolumn{1}{c}{93.60}   &\multicolumn{1}{c}{93.51}  &\multicolumn{1}{c}{93.29}    &\multicolumn{1}{c}{92.70} \\ 
\multicolumn{1}{c}{2}               &\multicolumn{1}{c}{93.55} &\multicolumn{1}{c}{93.62}   &\multicolumn{1}{c}{93.59}  &\multicolumn{1}{c}{93.40}    &\multicolumn{1}{c}{92.24} \\ 
\multicolumn{1}{c}{3}               &\multicolumn{1}{c}{93.87} &\multicolumn{1}{c}{93.53}   &\multicolumn{1}{c}{93.28}  &\multicolumn{1}{c}{93.11}    &\multicolumn{1}{c}{91.97} \\ 
\multicolumn{1}{c}{4}               &\multicolumn{1}{c}{93.83} &\multicolumn{1}{c}{93.61}  &\multicolumn{1}{c}{93.22}   &\multicolumn{1}{c}{93.37}    &\multicolumn{1}{c}{92.08} \\ 
\multicolumn{1}{c}{5}               &\multicolumn{1}{c}{93.48} &\multicolumn{1}{c}{93.55}   &\multicolumn{1}{c}{93.78}  &\multicolumn{1}{c}{93.27}    &\multicolumn{1}{c}{92.76} \\ 
\multicolumn{1}{c}{6}               &\multicolumn{1}{c}{93.65} &\multicolumn{1}{c}{93.62}   &\multicolumn{1}{c}{93.87}  &\multicolumn{1}{c}{93.32}    &\multicolumn{1}{c}{91.97}  \\ 
\multicolumn{1}{c}{7}               &\multicolumn{1}{c}{93.62} &\multicolumn{1}{c}{93.48}   &\multicolumn{1}{c}{93.79}  &\multicolumn{1}{c}{93.14}    &\multicolumn{1}{c}{92.07}  \\ \hline
\multicolumn{1}{c}{\textbf{AVG}}               &\multicolumn{1}{c}{93.68} &\multicolumn{1}{c}{93.59}   &\multicolumn{1}{c}{93.58}    &\multicolumn{1}{c}{93.30}    &\multicolumn{1}{c}{92.32}    \\ \hline
\end{tabular}}   
\caption{\label{tab:widgets} Performance of AL-DSGD using D-PSGD as baseline with different toplogy degree on CIFAR-10. AL-DSGD$^1$ presents full degree, degree = 13. AL-DSGD$^2$ presents $84.6\%$ degree, degree = 11, AL-DSGD$^3$ presents $69.2\%$ degree, degree = 9. AL-DSGD$^4$ presents $53.8\%$ degree, degree = 7. AL-DSGD$^5$ presents $38.5\%$ degree, degree = 5.}
\label{Tab:4}
\end{table}

\begin{table}[H]
\centering
\setlength{\tabcolsep}{0.4mm}{
\begin{tabular}{lllllllll}
\hline
\multicolumn{1}{c}{\textbf{}}       &\multicolumn{5}{c}{\textbf{TEST ACC}}      \\ \hline
\multicolumn{1}{c}{Node}   &\multicolumn{1}{c}{MATCHA$^1$}    &\multicolumn{1}{c}{MATCHA$^2$}    &\multicolumn{1}{c}{MATCHA$^3$}    &\multicolumn{1}{c}{MATCHA$^4$}   &\multicolumn{1}{c}{MATCHA$^5$}        \\ \hline
\multicolumn{1}{c}{0}               &\multicolumn{1}{c}{93.78} &\multicolumn{1}{c}{93.48}      &\multicolumn{1}{c}{93.07}   &\multicolumn{1}{c}{92.74}    &\multicolumn{1}{c}{92.78}           \\ 
\multicolumn{1}{c}{1}               &\multicolumn{1}{c}{93.68} &\multicolumn{1}{c}{93.45}   &\multicolumn{1}{c}{93.38}  &\multicolumn{1}{c}{92.93}    &\multicolumn{1}{c}{92.70} \\ 
\multicolumn{1}{c}{2}               &\multicolumn{1}{c}{92.76} &\multicolumn{1}{c}{93.58}   &\multicolumn{1}{c}{93.31}  &\multicolumn{1}{c}{92.96}    &\multicolumn{1}{c}{92.34} \\ 
\multicolumn{1}{c}{3}               &\multicolumn{1}{c}{93.91} &\multicolumn{1}{c}{93.46}   &\multicolumn{1}{c}{93.18}  &\multicolumn{1}{c}{92.96}    &\multicolumn{1}{c}{92.04} \\ 
\multicolumn{1}{c}{4}               &\multicolumn{1}{c}{93.68} &\multicolumn{1}{c}{93.52}  &\multicolumn{1}{c}{93.22}  &\multicolumn{1}{c}{92.83}    &\multicolumn{1}{c}{88.28} \\ 
\multicolumn{1}{c}{5}               &\multicolumn{1}{c}{93.81} &\multicolumn{1}{c}{93.55}   &\multicolumn{1}{c}{93.18}  &\multicolumn{1}{c}{92.72}    &\multicolumn{1}{c}{92.28} \\ 
\multicolumn{1}{c}{6}               &\multicolumn{1}{c}{93.72} &\multicolumn{1}{c}{93.49}   &\multicolumn{1}{c}{93.21}  &\multicolumn{1}{c}{92.97}    &\multicolumn{1}{c}{92.16}  \\ 
\multicolumn{1}{c}{7}               &\multicolumn{1}{c}{93.82} &\multicolumn{1}{c}{93.58}   &\multicolumn{1}{c}{93.34}  &\multicolumn{1}{c}{92.80}    &\multicolumn{1}{c}{87.67}  \\ \hline
\multicolumn{1}{c}{\textbf{AVG}}               &\multicolumn{1}{c}{93.65} &\multicolumn{1}{c}{93.51}   &\multicolumn{1}{c}{93.24}    &\multicolumn{1}{c}{92.86}    &\multicolumn{1}{c}{91.14}    \\ \hline
\end{tabular}}   
\caption{\label{tab:widgets}Performance MATCHA with different toplogy degree on CIFAR-10. MATCHA$^1$ presents full degree, degree = 13. MATCHA$^2$ presents $84.6\%$ degree, degree = 11, MATCHA$^3$ presents $69.2\%$ degree, degree = 9. MATCHA$^4$ presents $53.8\%$ degree, degree = 7. MATCHA$^5$ presents $38.5\%$ degree, degree = 5.}
\label{Tab:5}
\end{table}

\begin{table}[H]
\centering
\setlength{\tabcolsep}{0.4mm}{
\begin{tabular}{lllllllll}
\hline
\multicolumn{1}{c}{\textbf{}}       &\multicolumn{5}{c}{\textbf{TEST ACC}}      \\ \hline
\multicolumn{1}{c}{Node}   &\multicolumn{1}{c}{ALDSGD$^1$}    &\multicolumn{1}{c}{ALDSGD$^2$}    &\multicolumn{1}{c}{ALDSGD$^3$}    &\multicolumn{1}{c}{ALDSGD$^4$}    &\multicolumn{1}{c}{ALDSGD$^5$}        \\ \hline
\multicolumn{1}{c}{0}               &\multicolumn{1}{c}{93.87} &\multicolumn{1}{c}{93.36}      &\multicolumn{1}{c}{93.46}  &\multicolumn{1}{c}{93.39}    &\multicolumn{1}{c}{92.66}           \\ 
\multicolumn{1}{c}{1}               &\multicolumn{1}{c}{94.42} &\multicolumn{1}{c}{93.43}   &\multicolumn{1}{c}{93.43}  &\multicolumn{1}{c}{93.29}    &\multicolumn{1}{c}{92.85} \\ 
\multicolumn{1}{c}{2}               &\multicolumn{1}{c}{93.69} &\multicolumn{1}{c}{93.27}   &\multicolumn{1}{c}{93.46}  &\multicolumn{1}{c}{93.32}    &\multicolumn{1}{c}{92.71} \\ 
\multicolumn{1}{c}{3}               &\multicolumn{1}{c}{93.86} &\multicolumn{1}{c}{93.29}   &\multicolumn{1}{c}{93.46}  &\multicolumn{1}{c}{93.28}    &\multicolumn{1}{c}{92.85} \\ 
\multicolumn{1}{c}{4}               &\multicolumn{1}{c}{93.94} &\multicolumn{1}{c}{93.49}  &\multicolumn{1}{c}{93.43}   &\multicolumn{1}{c}{93.35}    &\multicolumn{1}{c}{92.68} \\ 
\multicolumn{1}{c}{5}               &\multicolumn{1}{c}{93.88} &\multicolumn{1}{c}{93.18}   &\multicolumn{1}{c}{93.50}  &\multicolumn{1}{c}{93.18}    &\multicolumn{1}{c}{92.87} \\ 
\multicolumn{1}{c}{6}               &\multicolumn{1}{c}{93.89} &\multicolumn{1}{c}{93.33}   &\multicolumn{1}{c}{93.50}  &\multicolumn{1}{c}{93.31}    &\multicolumn{1}{c}{92.65}  \\ 
\multicolumn{1}{c}{7}               &\multicolumn{1}{c}{93.98} &\multicolumn{1}{c}{93.29}   &\multicolumn{1}{c}{93.64}  &\multicolumn{1}{c}{93.28}    &\multicolumn{1}{c}{92.70}  \\ \hline
\multicolumn{1}{c}{\textbf{AVG}}               &\multicolumn{1}{c}{93.94} &\multicolumn{1}{c}{93.33}   &\multicolumn{1}{c}{93.49}    &\multicolumn{1}{c}{93.30}    &\multicolumn{1}{c}{92.75}    \\ \hline
\end{tabular}}   
\caption{\label{tab:widgets} Performance of AL-DSGD using MATCHA as baseline with different toplogy degree on CIFAR-10. AL-DSGD$^1$ presents full degree, degree = 13. AL-DSGD$^2$ presents $84.6\%$ degree, degree = 11, AL-DSGD$^3$ presents $69.2\%$ degree, degree = 9. AL-DSGD$^4$ presents $53.8\%$ degree, degree =7, AL-DSGD$^5$ presents $38.5\%$ degree, degree = 5.}
\label{Tab:6}
\end{table}


From table ~\ref{Tab:3}, we can tell limit degree D-PSGD convergences slower and achieve worse final model comparing with full degree D-PSGD. By applying dynamic communication graphs, AL-DSGD is more robust to sparse topology under limited communication environment. From table~\ref{Tab:4}, we can tell the performance of final model remains the same when we reduce degree to 69\%. The performance of AL-DSGD with 38\% of degree is even better than the performance of D-PSGD with 84\% of degree. We can get the same conclusion from table~\ref{Tab:5} and table~\ref{Tab:6} when applying AL-DSGD to MATCHA baseline.



\newpage
\section{Dynamic communication graph with two Laplacian matrices.}\label{Appendix: 3}

\begin{figure}[H]
    \centering
    \includegraphics[width=60mm]{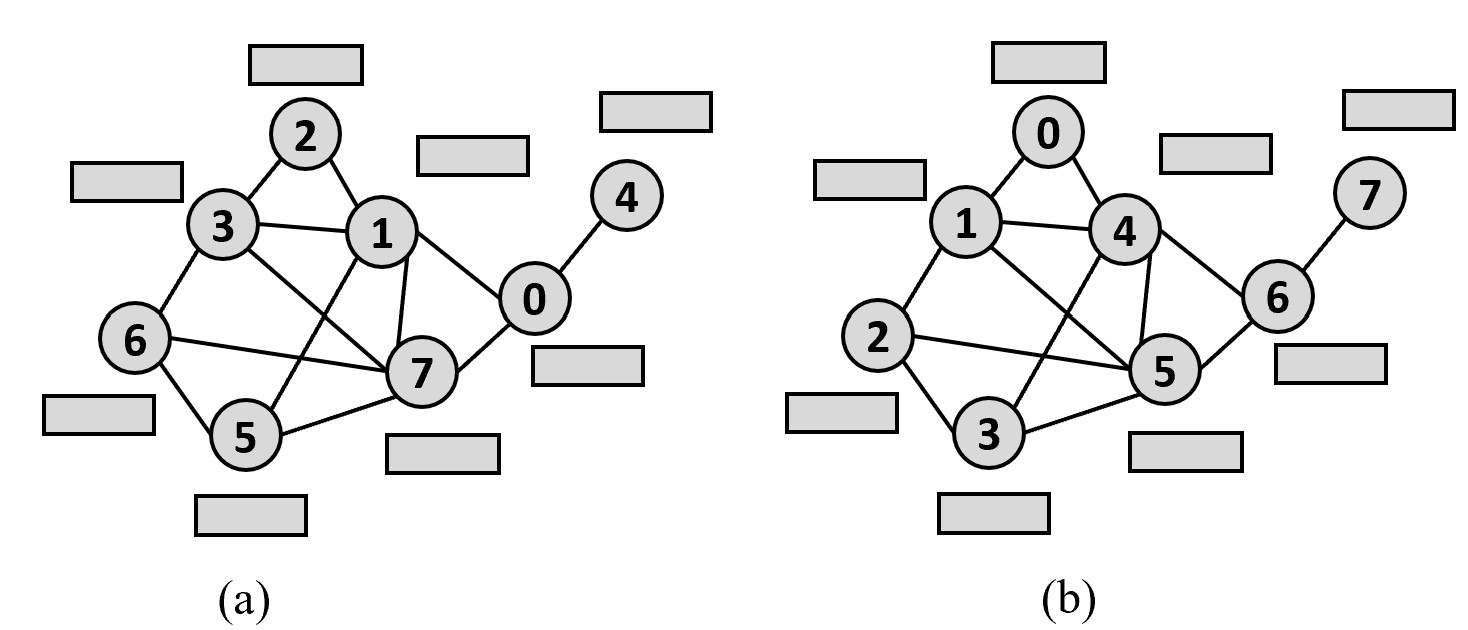}
    \centering
    \caption{Rotate the location of worker between (a) and (b).}
    \label{Fig: 8}
\end{figure}

\begin{table}[H]
\centering
\setlength{\tabcolsep}{0.3mm}{
\begin{tabular}{lllllllll}
\hline
\multicolumn{1}{c}{\textbf{}}       &\multicolumn{5}{c}{\textbf{TEST ACC}}      \\ \hline
\multicolumn{1}{c}{\textbf{Node}}   &\multicolumn{1}{c}{\textbf{ALDSGD$^1$}}    &\multicolumn{1}{c}{\textbf{ALDSGD$^2$}}    &\multicolumn{1}{c}{\textbf{ALDSGD$^3$}} &\multicolumn{1}{c}{\textbf{ALDSGD$^4$}}    &\multicolumn{1}{c}{\textbf{ALDSGD$^5$}}        \\ \hline
\multicolumn{1}{c}{0}               &\multicolumn{1}{c}{93.40} &\multicolumn{1}{c}{93.46}      &\multicolumn{1}{c}{93.60}  &\multicolumn{1}{c}{92.61}    &\multicolumn{1}{c}{91.68}           \\ 
\multicolumn{1}{c}{1}               &\multicolumn{1}{c}{93.28} &\multicolumn{1}{c}{93.42}   &\multicolumn{1}{c}{93.46}  &\multicolumn{1}{c}{92.46}    &\multicolumn{1}{c}{91.26}  \\ 
\multicolumn{1}{c}{2}               &\multicolumn{1}{c}{93.42} &\multicolumn{1}{c}{93.46}   &\multicolumn{1}{c}{93.33}  &\multicolumn{1}{c}{92.44}    &\multicolumn{1}{c}{91.95}  \\ 
\multicolumn{1}{c}{3}               &\multicolumn{1}{c}{93.44} &\multicolumn{1}{c}{93.39}   &\multicolumn{1}{c}{93.39}  &\multicolumn{1}{c}{92.34}    &\multicolumn{1}{c}{91.96}  \\ 
\multicolumn{1}{c}{4}               &\multicolumn{1}{c}{93.38} &\multicolumn{1}{c}{93.46}  &\multicolumn{1}{c}{93.46}  &\multicolumn{1}{c}{92.39}    &\multicolumn{1}{c}{91.70}  \\ 
\multicolumn{1}{c}{5}               &\multicolumn{1}{c}{93.24} &\multicolumn{1}{c}{93.31}   &\multicolumn{1}{c}{93.46}  &\multicolumn{1}{c}{92.56}    &\multicolumn{1}{c}{91.84}  \\ 
\multicolumn{1}{c}{6}               &\multicolumn{1}{c}{93.11} &\multicolumn{1}{c}{93.41}   &\multicolumn{1}{c}{93.34}  &\multicolumn{1}{c}{92.47}    &\multicolumn{1}{c}{91.76}   \\ 
\multicolumn{1}{c}{7}               &\multicolumn{1}{c}{93.32} &\multicolumn{1}{c}{93.54}   &\multicolumn{1}{c}{93.35}  &\multicolumn{1}{c}{92.43}    &\multicolumn{1}{c}{91.74} \\ \hline
\multicolumn{1}{c}{\textbf{AVG}}               &\multicolumn{1}{c}{93.32} &\multicolumn{1}{c}{93.43}   &\multicolumn{1}{c}{93.47}    &\multicolumn{1}{c}{92.46}    &\multicolumn{1}{c}{91.73}     \\ \hline
\end{tabular}}   
\caption{\label{tab:widgets} Performance of AL-DSGD using D-PSGD as baseline with different toplogy degree on CIFAR-10. AL-DSGD$^1$ presents full degree, degree = 13. AL-DSGD$^2$ presents $84.6\%$ degree, degree = 11, AL-DSGD$^3$ presents $69.2\%$ degree, degree = 9. AL-DSGD$^4$ presents $53.8\%$ degree, degree = 7. AL-DSGD$^5$ presents $38.5\%$ degree, degree = 5.}
\label{Tab:9}
\end{table}

\begin{table}[H]
\centering
\setlength{\tabcolsep}{0.4mm}{
\begin{tabular}{lllllllll}
\hline
\multicolumn{1}{c}{\textbf{}}       &\multicolumn{5}{c}{\textbf{TEST ACC}}      \\ \hline
\multicolumn{1}{c}{\textbf{Node}}   &\multicolumn{1}{c}{\textbf{ALDSGD$^1$}}    &\multicolumn{1}{c}{\textbf{ALDSGD$^2$}}    &\multicolumn{1}{c}{\textbf{ALDSGD$^3$}}    &\multicolumn{1}{c}{\textbf{ALDSGD$^4$}}    &\multicolumn{1}{c}{\textbf{ALDSGD$^5$}}        \\ \hline
\multicolumn{1}{c}{0}               &\multicolumn{1}{c}{93.85} &\multicolumn{1}{c}{93.26}      &\multicolumn{1}{c}{93.42}   &\multicolumn{1}{c}{92.94}    &\multicolumn{1}{c}{92.28}           \\ 
\multicolumn{1}{c}{1}               &\multicolumn{1}{c}{94.21} &\multicolumn{1}{c}{93.23}   &\multicolumn{1}{c}{93.39}  &\multicolumn{1}{c}{93.01}    &\multicolumn{1}{c}{92.27} \\ 
\multicolumn{1}{c}{2}               &\multicolumn{1}{c}{93.76} &\multicolumn{1}{c}{93.38}   &\multicolumn{1}{c}{93.42}  &\multicolumn{1}{c}{93.00}    &\multicolumn{1}{c}{92.28} \\ 
\multicolumn{1}{c}{3}               &\multicolumn{1}{c}{93.78} &\multicolumn{1}{c}{93.26}   &\multicolumn{1}{c}{93.39}  &\multicolumn{1}{c}{93.16}    &\multicolumn{1}{c}{92.31} \\ 
\multicolumn{1}{c}{4}               &\multicolumn{1}{c}{93.59} &\multicolumn{1}{c}{93.29}  &\multicolumn{1}{c}{93.30}   &\multicolumn{1}{c}{93.14}    &\multicolumn{1}{c}{92.28} \\ 
\multicolumn{1}{c}{5}               &\multicolumn{1}{c}{93.86} &\multicolumn{1}{c}{93.16}   &\multicolumn{1}{c}{93.35}  &\multicolumn{1}{c}{93.05}    &\multicolumn{1}{c}{92.27} \\ 
\multicolumn{1}{c}{6}               &\multicolumn{1}{c}{93.97} &\multicolumn{1}{c}{93.30}   &\multicolumn{1}{c}{93.41}  &\multicolumn{1}{c}{93.31}    &\multicolumn{1}{c}{92.29}  \\ 
\multicolumn{1}{c}{7}               &\multicolumn{1}{c}{93.86} &\multicolumn{1}{c}{93.22}   &\multicolumn{1}{c}{93.26}  &\multicolumn{1}{c}{93.09}    &\multicolumn{1}{c}{92.30}  \\ \hline
\multicolumn{1}{c}{\textbf{AVG}}               &\multicolumn{1}{c}{93.86} &\multicolumn{1}{c}{93.26}   &\multicolumn{1}{c}{93.36}    &\multicolumn{1}{c}{93.08}    &\multicolumn{1}{c}{92.29}    \\ \hline
\end{tabular}}   
\caption{\label{tab:widgets} Performance of AL-DSGD using MATCHA as baseline with different toplogy degree on CIFAR-10. D-PSGD$^1$ presents full degree, degree = 13. D-PSGD$^2$ presents $84.6\%$ degree, degree = 11, D-PSGD$^3$ presents $69.2\%$ degree, degree = 9. D-PSGD$^4$ presents $53.8\%$ degree, degree =7, D-PSGD$^5$ presents $38.5\%$ degree, degree = 5.}
\label{Tab:8}
\end{table}

We apply dynamic communication graph with two Laplacian matrices. The dynamic communication graph can be found in figure~\ref{Fig: 8}. The results of AL-DSGD using D-PSGD and MATCHA as baselines can be found in table~\ref{Tab:9} and table~\ref{Tab:8} respectively. Compare them with table~\ref{Tab:3} and table~\ref{Tab:5}, we can clearly see the improvement when using the dynamic communication graph with two Laplacian matrices.

\newpage
\section{Performance of MATCHA Baseline and MATCHA Based AL-DSGD}\label{Appendix: 4}

\begin{table}[H]
\centering
\vspace{-0.05in}
\setlength{\tabcolsep}{1.2mm}{
\begin{tabular}{lllllllll}
\hline
\multicolumn{1}{c}{\textbf{}}       &\multicolumn{2}{c|}{\textbf{CIFAR-10/ResNet-50}}      &\multicolumn{2}{c}{\textbf{CIFAR-100/WideResNet}}      \\ 
\hline
\multicolumn{1}{c}{\textbf{Node}}   &\multicolumn{1}{c}{\textbf{MATCHA}}    &\multicolumn{1}{c|}
{\textbf{AL-DSGD}}    &\multicolumn{1}{c}{\textbf{MATCHA}}    &\multicolumn{1}{c}{\textbf{AL-DSGD}}    \\ \hline
\multicolumn{1}{c}{0}               &\multicolumn{1}{c}{93.78} &\multicolumn{1}{c|}{93.87}      &\multicolumn{1}{c}{76.90} &\multicolumn{1}{c}{76.85}    \\ 
\multicolumn{1}{c}{1}               &\multicolumn{1}{c}{93.68} &\multicolumn{1}{c|}{93.42}   &\multicolumn{1}{c}{76.94} &\multicolumn{1}{c}{77.15}\\ 
\multicolumn{1}{c}{2}               &\multicolumn{1}{c}{92.76} &\multicolumn{1}{c|}{93.69}   &\multicolumn{1}{c}{77.03} &\multicolumn{1}{c}{77.29}\\ 
\multicolumn{1}{c}{3}               &\multicolumn{1}{c}{92.91} &\multicolumn{1}{c|}{93.86}   &\multicolumn{1}{c}{77.07} &\multicolumn{1}{c}{77.02}\\ 
\multicolumn{1}{c}{4}               &\multicolumn{1}{c}{93.68} &\multicolumn{1}{c|}{93.94}  &\multicolumn{1}{c}{77.02} &\multicolumn{1}{c}{77.23}\\ 
\multicolumn{1}{c}{5}               &\multicolumn{1}{c}{93.81} &\multicolumn{1}{c|}{93.88}   &\multicolumn{1}{c}{\textbf{74.62}$\downarrow$} &\multicolumn{1}{c}{77.43}\\ 
\multicolumn{1}{c}{6}               &\multicolumn{1}{c}{93.72} &\multicolumn{1}{c|}{93.89}   &\multicolumn{1}{c}{76.59} &\multicolumn{1}{c}{77.30}                 \\ 
\multicolumn{1}{c}{7}               &\multicolumn{1}{c}{93.82} &\multicolumn{1}{c|}{93.98}   &\multicolumn{1}{c}{76.77} &\multicolumn{1}{c}{77.19}                 \\ \hline
\multicolumn{1}{c}{\textbf{AVG}}    &\multicolumn{1}{c}{93.65} &\multicolumn{1}{c|}{\textbf{93.94}}   &\multicolumn{1}{c}{76.62} &\multicolumn{1}{c}{\textbf{77.18}}    \\ \hline
\end{tabular}} 
\caption{\label{tab:widgets} Test accuracy obtained with MATCHA and MATCHA-based AL-DSGD for ResNet-50 model trained on CIFAR-10 and WideResNet model trained on CIFAR-100.}
\label{Tab:10}
\end{table}

\begin{figure}[H]
    \centering
    \includegraphics[width=140mm]{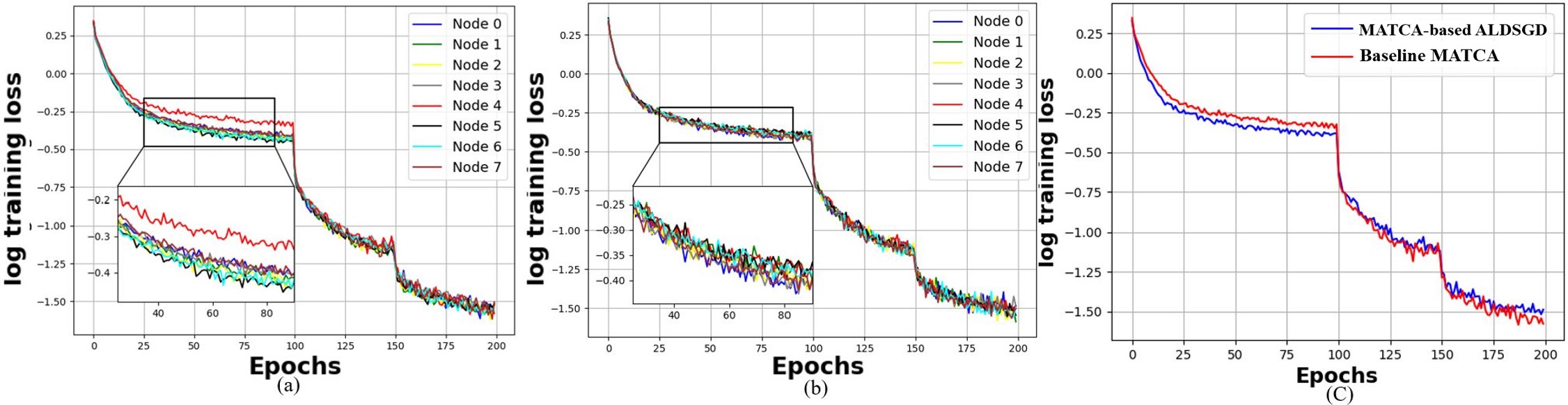}
    \centering
    \caption{ResNet-50 on CIFAR-10. (a):MATCHA baseline (b):AL-DSGD (c): Comparison between worst performance workers in a and b}
    \label{Fig: 7}
\end{figure}

\begin{figure}[H]
    \centering
    \includegraphics[width=140mm]{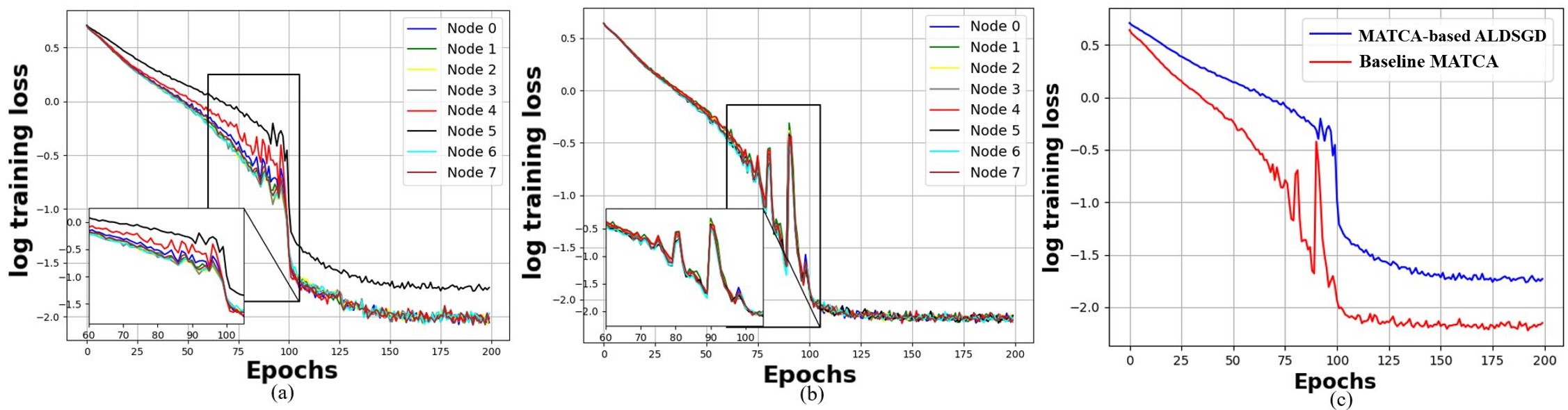}
    \centering
    \vspace{-0.05in}
    \caption{WideResNet on CIFAR-100. (a):MATCHA baseline (b):AL-DSGD (c): Comparison between worst performance workers in a and b}
    \label{Fig: 13}
\end{figure}

\begin{figure}[H]
    \centering
    \includegraphics[width=130mm]{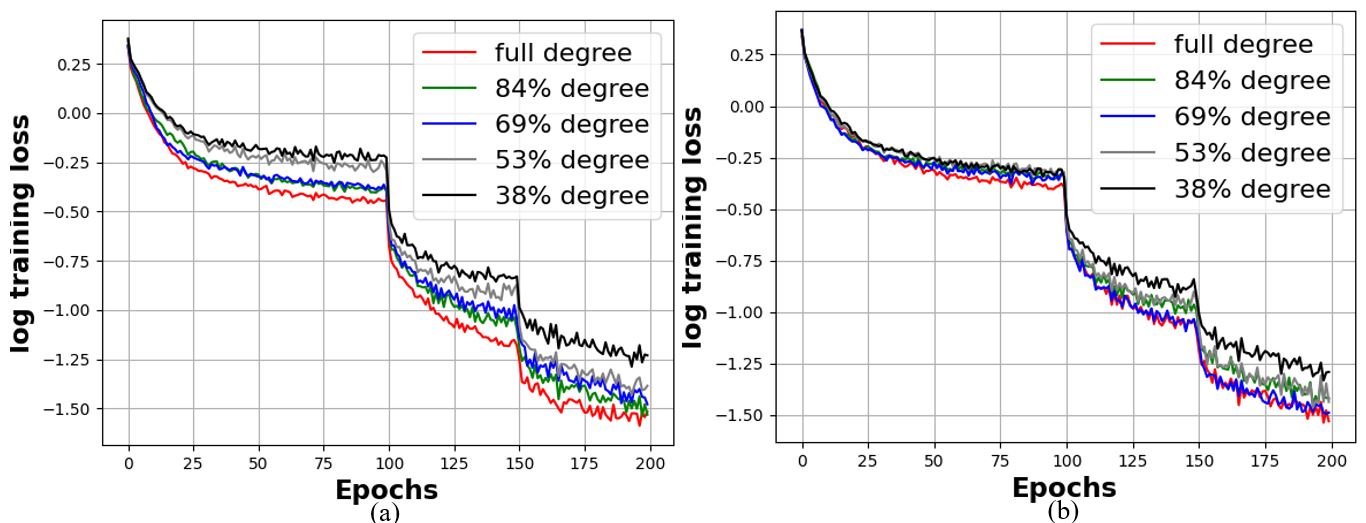}
    \centering
    \caption{ResNet-50 on CIFAR-10. (a):Performance of MATCHA with different topology degrees (b): Performance of MATCHA based AL-DSGD with different topology degrees.}
    \label{Fig: 18}
\end{figure}

The results obtained when training the models with MATCHA and MATCHA-based AL-DSGD (i.e., AL-DSGD implemented on the top of D-PSGD) are presented in Table~\ref{Tab:10}. In The comparison of MATCHA and MATCHA based AL-DSGD can be found in figure~\ref{Fig: 7} and figure~\ref{Fig: 13}. From both figures, we can find MATCHA based AL-DSGD achieves more stable and faster converge comparing with MATCHA baseline. In tabel~\ref{Tab:10}, we incude the result of MATCHA and AL-DSGD.

\end{document}